\documentclass[journal]{IEEEtran}
\usepackage{cite}
\usepackage{url}
\usepackage{float}
\usepackage{multirow}
\usepackage{booktabs}
\usepackage{caption}
\usepackage{amsmath}
\usepackage{amsthm}
\usepackage{amssymb, amsfonts}
\usepackage{algorithm}
\usepackage{algorithmic}
\allowdisplaybreaks[4]
\usepackage{color,graphicx}
\graphicspath{{figure/}}
\usepackage{lettrine}
\usepackage{ulem}

\newtheorem{thm}{Theorem}
\newtheorem{proposition}{Proposition}

\newenvironment{sequation}{\begin{equation}\small}{\end{equation}}

\begin{document}
\title{Bayesian Cycle-Consistent Generative Adversarial Networks via Marginalizing Latent Sampling}
\author{
Haoran~You, 
Yu~Cheng,
Tianheng~Cheng,
Chunliang~Li,
and
Pan~Zhou 
\thanks{
Haoran You, Pan Zhou and Tianheng Cheng are with School of Electrical Information and Communication Engineering, Huazhong University of Science \& Technology, Wuhan, 430074, China. Email: \{ranery,vic,panzhou,@alumni.hust.edu.cn\}

Yu Cheng is with Microsoft AI \& Research, Redmond, Washington, 98052. Email: yu.cheng@microsoft.com

Chunliang Li is with Machine Learning Department, Carnegie Mellon University, Pittsburgh, 15213. Email: chunlial@cs.cmu.edu
}
\vspace{-.4cm}
}

\IEEEtitleabstractindextext{
\begin{abstract}
Recent techniques built on Generative Adversarial Networks (GANs), such as Cycle-Consistent GANs, are able to learn mappings among different domains built from unpaired datasets, through min-max optimization games between generators and discriminators. However, it remains challenging to stabilize the training process and thus cyclic models fall into mode collapse accompanied by the success of discriminator.
To address this problem, we propose an novel Bayesian cyclic model and an integrated cyclic framework for inter-domain mappings. The proposed method motivated by Bayesian GAN explores the full posteriors of cyclic model via sampling latent variables and optimizes the model with maximum a posteriori (MAP) estimation. Hence, we name it Bayesian CycleGAN.
In addition, original CycleGAN cannot generate diversified results. But it is feasible for Bayesian framework to diversify generated images by replacing restricted latent variables in inference process.
We evaluate the proposed Bayesian CycleGAN on multiple benchmark datasets, including Cityscapes, Maps, and Monet2photo.
The proposed method improve the per-pixel accuracy by 15\% for the Cityscapes semantic segmentation task within origin framework and improve 20\% within the proposed integrated framework, showing better resilience to imbalance confrontation. The diversified results of Monet2Photo style transfer also demonstrate its superiority over original cyclic model.
We provide codes for all of our experiments in \url{https://github.com/ranery/Bayesian-CycleGAN}.
\end{abstract}

\begin{IEEEkeywords}
Generative Adversarial Networks, Bayesian Inference, Adversarial Learning, Domain Translation.
\end{IEEEkeywords}}

\maketitle
\IEEEdisplaynontitleabstractindextext
\IEEEpeerreviewmaketitle

\section{Introduction}
\lettrine[lines=2]{L}{earning} mappings between different domains has recently received increasing attention, especially for image-to-image domain translation tasks, like semantic segmentation and style transfer.
Powered by the representing capabilities of deep neural networks, Isola et al. \cite{isola2017image} leverages Generative Adversarial Networks (GANs) \cite{NIPSGAN} to develop a general framework called Pix2Pix for domain translation, which frees us from hand-engineer mapping functions.
To tackle the scenario where paired data is scared, several cyclic frameworks have been proposed to extend Pix2Pix to unsupervised setting, including methods relied on cycle-consistency based constraints such as CycleGAN \cite{CycleGAN2017,pmlr-v70-kim17a,yi2017dualgan,xgan}, while others relied on weight sharing constraints \cite{liu2016coupled,liu2017unsupervised,huang2018munit}.
In particular, these cyclic frameworks have two critical assumptions:
\begin{itemize}
    \item \textit{Assumption 1:} Dependent GANs accurately model the true data distribution of target domain.
    \item \textit{Assumption 2:} The underlying domain translation is deterministic (One-to-One mapping).
\end{itemize}

\begin{figure}[t]
    \centering
    \includegraphics[width=0.92\linewidth]{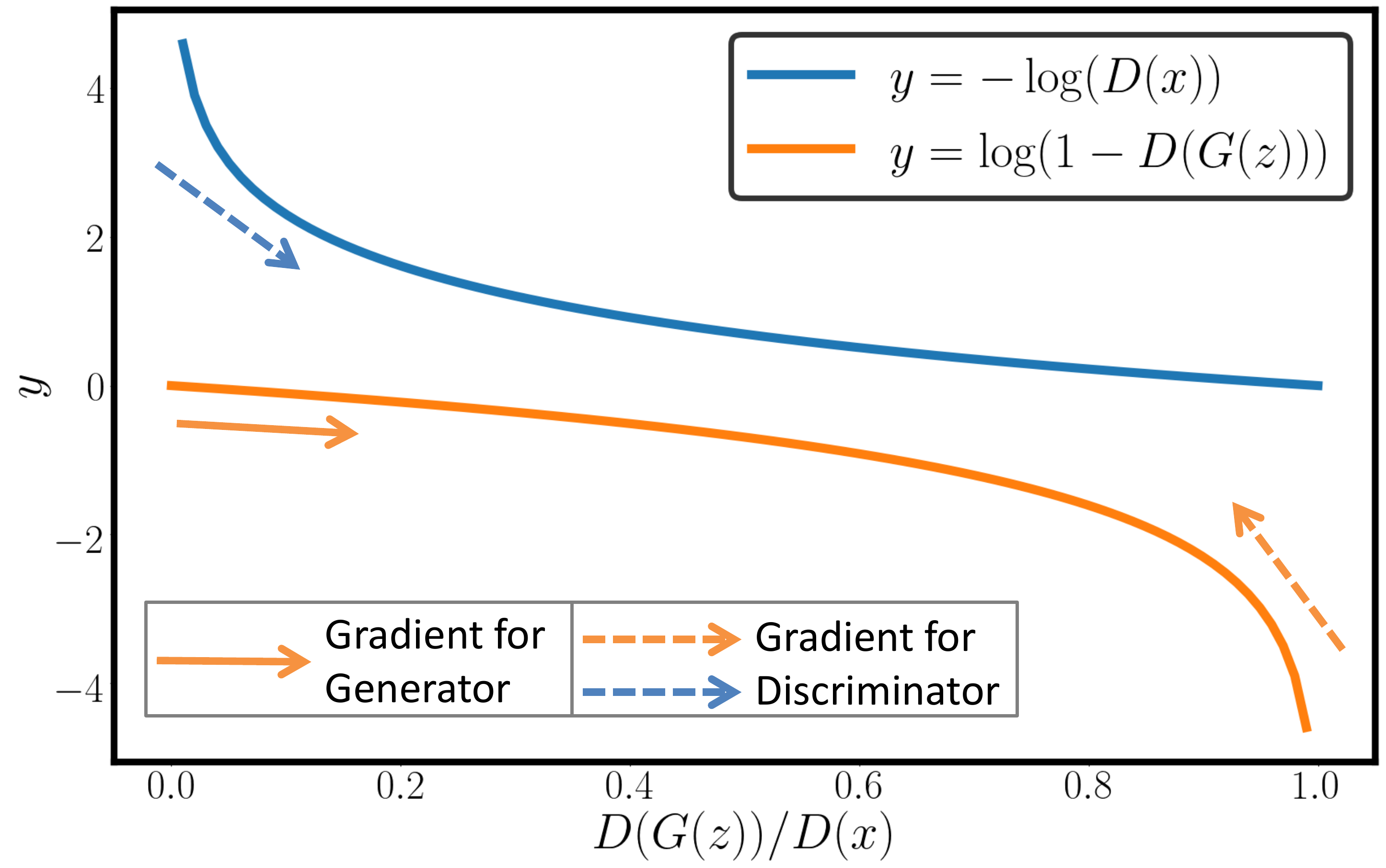}
    \caption{Illustration of GANs objective. Abscissa axis represents auxiliary variable $D(x)$ or $D(G(z))$ in order to match curves.}
    \label{fig:gan_stability}
    \vspace{-2em}
\end{figure}

\textit{Assumption 1} reveals the fundamental goal of GANs.
However, their learning objective can lead to \textit{mode collapse}, where generator simply memorizes a few samples from training set to fool discriminator.
Take the basic objective function (Equ. \ref{gan_objective}) as an example, discriminator $D$ wants to distinguish the real data distribution $x$ from generated fake data distribution $G(z)$, thus its mapping grows towards $D(x) \sim 1$ and $D(G(z)) \sim 0$ to the extent that maximizing objective function. While generator $G$ aims at deceiving discriminator so as to increases $D(G(z))$ and then minimizes objective function\textcolor{blue}{:}
\begin{align}\label{gan_objective}
    \min_{G} \max_{D} & \, \, \mathbb{E}_{x \sim p_{data}(x)} [\log D(x)] \\ \nonumber
    & + \mathbb{E}_{z \sim p_z(z)} [\log(1-D(G(z)))]\textcolor{blue}{.}
\end{align}
To better understand why min-max optimization traps generator into mode collapse, we illustrate the objective function at Fig. \ref{fig:gan_stability}.
When discriminator goes wrong, there are great gradients for updating parameters. While when generated samples are likely fake in the kick-off process of training, gradients in that region are relatively flat, resulting in an initially unfair game and accounting for the mode collapse.
By virtue of the illustration we notice the weak training stability of GANs: unbalanced optimization complexity eases mode collapse,
which is consistent with recently reported obstacles in training GANs \cite{arjovsky2017wasserstein, gulrajani2017improved, mao2017least, saatci2017bayesian, bayesian2019}. All these evidences put the \textit{Assumption 1} in doubt.

To tackle the unsupervised generation tasks, cyclic GANs leverage GANs to learn mutual mappings between different domains using coupled generators without supervision (i.e., no need for paired dataset). Specifically, a cyclic framework consists of a couple of generators: $G_A$ maps one domain to another while $G_B$ maps inversely,
and therefore inherits GANs' unbalanced optimization objectives. Moreover, it remains harder for the generators in cyclic framework to stably model the mappings due to the lack of supervision.
The existing cyclic frameworks achieve expedient balance by elaborately tuning hyper-parameters and using tricks like buffering generated images to facilitate generator training, avoiding the untimely success of discriminator. Otherwise it would be difficult to remain stable, the generator is prone to collapse the variance of data distribution (memorize training samples) to achieve infinite likelihood \cite{saatci2017bayesian}.

To solve the stability issue in GANs, Bayesian GAN \cite{saatci2017bayesian} more accurately models the true data distribution by fully representing the posterior distribution over the parameters of both generators and discriminators, i.e., Bayesian GAN forms posteriors by sampling $\theta_g \!\!\sim\!\! p(\theta_g)$ and $\theta_d \!\!\sim\!\! p(\theta_d)$, where $\!\theta_g\!$ and $\!\theta_d\!$ represent network parameters in generator and discriminator, respectively.
Inspired by the success of Bayesian GAN, we aim to solve the harder stability issue in cyclic framework.
However, there is a non-trivial issue for directly applying Bayesian GAN to the cyclic frameworks.
Specifically, since cyclic framework includes two generators with inverse function, we can not follow Bayesian GAN to form posteriors by sampling generators and discriminators:
\begin{enumerate}
    \item
    First, considering the cycle-consistency loss from $x$:
    $\mathbb{E}_{x \sim p_X(x)} \| G_B(G_A(x)) - x \|_1$,
    if we sample the two generators simultaneously, the reconstructed images will be skimble-scamble practically, i.e., for two $G_A$ samples, $G_{B_1}(G_{A_1}(x)) \neq G_{B_1}(G_{A_2}(x))$. While the cycle-consistency loss in the cyclic framework is dedicated to make the reconstructed images the same as inputs (i.e., $G_{B_1}(G_{A_1}(x))) = G_{B_1}(G_{A_2}(x)) = x$). The only way for the samples of coupled generators to be mutually inverse is if they collapse to being roughly one pair ($G_{A_1} = G_{A_2}$), then the idea of sampling coupled generators fails. Similar failures can also be found in the CycleGAN with stochastic mappings \cite{almahairi2018augmented}.

    \item
    In addition, sampling the coupled generators is very inefficient. It not only increases the memory footprint but also makes the training time/energy quadratically increase. For example, if we sample three $G_A$ and three $G_B$, the memory will be 3$\times$ and the training time will be 9$\times$ as compared to the original framework.
\end{enumerate}
To solve above issues,
we propose a new Bayesian framework with regularized priors called Bayesian CycleGAN.
Instead of sampling generators, we introduce a latent space for exploring the full posteriors.
Specifically, we combine source images with corresponding certain volume of sampled latent variables to construct variant-concatenate inputs, transferring network sampling to input domain sampling in order to avoid cyclic issues mentioned above.
By exploring full posteriors, Bayesian CycleGAN enhances generator training to resist crazy learning discriminator, and therefore alleviates the risk of mode collapse, boosting realistic color generating and multimodal distribution learning.
The proposed Bayesian CycleGAN models the true data distribution more accurately by fully representing the posterior distribution over the parameters of both generator and discriminator as demonstrated by experiments.

To gain more insights of the improved stability,
we increase the number of training samples for two discriminators to form an integrated cyclic framework by additionally introducing reconstructed images as fake samples apart from the generated images, where a balance factor $\gamma$ is used to indicate the coefficient of the added reconstructed image numbers as compared to the generated ones. The balance factor $\gamma$ has two helpful functions:
1) it enhances the reconstructed learning and boosts realistic color distribution; 2) it effects the stability of cyclic framework by accelerating the learning process of discriminator.
We strictly proof the global optimality of proposed integrated cyclic framework in Section \ref{theoretical_analysis}.
The derivation shows that the introduced adversarial loss allows the balance of power even more tilted towards discriminator, allowing us to control the degree of difficulty for stabilizing cyclic framework and then conduct ablation studies of the tolerance of Bayesian CycleGAN in different confrontational contexts, i.e. different values of $\gamma$.
It further reveals the fundamental motivation of being ``Bayesian'': bearing remarkable resilience to the unbalanced game and thus achieving a win-win situation in lieu of the unilateral victory of discriminators\cite{saatci2017bayesian}.

\textit{Assumption 2} renders cyclic framework unable to handle tasks requiring flexible mappings.
Furthermore, current cyclic framework cannot generate diversified results even when equipped with latent variable as documented in \cite{almahairi2018augmented}.
Thinking outside of the box, if we divides the latent space and use only one specific part to fit training process \cite{tnnls-domain},
then it will be feasible to diversify generating in inference process by replacing the specific latent variables with others.
In practice, we adopt an encoder to generate that specific part from source domains, which we term as the \emph{statistic feature map (SFM)}.
We show the diversified effect of this delineative idea with Monet2photo style transfer experiment in Section \ref{experiment}.

To the best of our knowledge, we present the first detailed treatment of Bayesian CycleGAN, including the novel Bayesian generalization, sampling based inference, and semi-supervised learning extension.
One can also construct Bayesian cyclic framework in other ways, like using dropout as a Bayesian approximation. We compare with the dropout version in experiments.
The proposed Bayesian CycleGAN shows significant improvement for unsupervised semantic segmentation, improving 15\% for per-pixel accuracy and 4\%  for class intersection of union (IOU) compared to CycleGAN \cite{CycleGAN2017}.

The main contributions of our work are summarized as:
\begin{enumerate}
\item We propose a novel cyclic model called \textit{Bayesian CycleGAN} for stabilizing unsupervised learning by exploring the full posteriors via latent sampling.
\item We introduce a hyper-parameter for better balancing the reconstructed learning and training stability, and prove this variation also has a global optimum.
\item We impose restriction on the latent space for generating diversified images by adding encoder networks.
\item We perform experiments on several representative data sets, testifying that the proposed model outperforms the original model and confers a defense against instability.
\end{enumerate}
The rest of this paper is organized as follows.
Section \ref{related_works} recaps recent related works and elaborates the difference between them and proposed method.
Section \ref{proposed_method} sets up the translation problem and proposed integrated cyclic framework, and then formulates the full posteriors of Bayesian CycleGAN.
Section \ref{theoretical_analysis} presents the shift global optimum and equilibrium conditions of proposed method.
Section \ref{experiment} shows plenty of experiments and discussions for un/semi-supervised learning.
Finally, Section \ref{conclusion} concludes this paper.

\vspace{-0.5em}
\section{Related Works}
\label{related_works}
\textbf{Image-to-Image Translation}
Image-to-Image translation is a classical problem in computer vision.
Motivated by the success of generative model like Generative Adversarial Networks \cite{NIPSGAN}, our focus has begun to change from one selected artwork style transfer to inter-domain translation.
For supervised learning, conditional GANs has been used in Pix2pix \cite{isola2017image} to convert images from one domain to another, like landscapes to semantic labels. Also, BicycleGAN \cite{NIPS2017_6650} is the enhanced version of Pix2pix since it generates diversified outputs. Besides, VAE \cite{kingma2013auto,tnnls-denoising} is a probabilistic model with an encoder to map source images to a latent representation and a decoder to reconstruct back to source domain, which can also be used in inter-domain translation. We adopt a VAE-like network to generate the restricted latent sampling, but there are two main differences: 1) our network replaces the encoder/decoder to down/up-sample layers; 2) we abolish the $l_1$ loss between reconstructed image and source image.
For unsupervised learning, there has been a recent surge on cyclic GANs \cite{CycleGAN2017,liu2016coupled,liu2017unsupervised,yi2017dualgan,taigman2016unsupervised,tnnls-adversarial,ouyang2018pedestrian,Li2020CVPR,seqattngan}, which aim to learn a joint distribution given marginal distribution in each individual domain.
Zhu \textit{et al.} present CycleGAN to learn mappings without paired data, achieving fantastic results \cite{CycleGAN2017}.
Almahairi \textit{et al.} extend it to an augmented many-to-many mapping method, taking Gaussian noise into another cycle with GAN loss and $l_1$ loss restrained and training normal cycle along with the latent cycle.

\textbf{GANs' Stability Exploration}
Many recent works have focused on improving the stability of GANs training \cite{nowozin2016f,mao2017least,arjovsky2017wasserstein,li2017mmd,gulrajani2017improved,zhai2016generative}, such as methods using Wasserstein distance \cite{arjovsky2017wasserstein}, latent feature matching \cite{gulrajani2017improved,mroueh2018sobolev}, MMD loss \cite{li2017mmd},
architecture search \cite{Gong2019ICCV}
along with Bayesian methods \cite{probgan2019,saatci2017bayesian, bayesian2019}.
Among which Bayesian GAN is the most related work to our proposed method, it explores an expressive posterior over the parameter of generators for avoiding \textit{mode collapse}.
Sampling over the generator can be formalized as follows:
\begin{enumerate}
    \item Sample from posteriors $\theta_g \sim p(\theta_g | \theta_d)$;
    \item Sample data points $z^{(1)}, \cdots, z^{(n)} \sim p(z)$;
    \item Inference $\tilde{x}^{(j)} = G(z^{(j)}; \theta_g) \sim p_{\tilde{x}}(x)$.
\end{enumerate}
Having such samples can be very useful, the samples for $\theta_g$ help to form a committee of generators to strengthen the discriminator, and samples for $\theta_d$ in turn amplifies the overall adversarial signals, thereby further improving the learning process and alleviating \textit{mode collapse}. In practice, they use Stochastic Gradient Hamiltonian Monte Carlo (SGHMC) \cite{chen2014stochastic} for posterior sampling.

\underline{\textit{Difference with Bayesian GAN}}
There are two main differences: 1) instead of generating predictive distribution with stochastic variable, our posteriors are based on cyclic framework, mapping between different domains; 2) Bayesian GAN explores whole posterior distribution over the network weights via performing network sampling in iterative process while our model adopts iterative MAP optimization along with latent sampling due to the practical nature of cyclic model.

\underline{\textit{Difference with CycleGAN}}
There are three main differences:
1) Bayesian CycleGAN proposes the Bayesian formulation with Gaussian priori for original cyclic framework in theory;
2) it is optimized with MAP and latent sampling, bringing robustness improvement via the inductive bias, while CycleGAN optimization can be viewed as the maximum likelihood estimation (MLE);
3) in addition to one-to-one deterministic mapping, Bayesian CycleGAN can generate diversified outputs by imposing restriction on the latent variable.

\vspace{-0.5em}
\section{Proposed Method}
\label{proposed_method}

In this section, we first provide the background of cyclic GANs.
Next, targeting the consequent problems (weak training stability and lack of diversity) resulting from the two underlying assumptions of cyclic GANs, we propose: 1) Bayesian formulation of integrated cyclic framework, in which we introduce balance factor $\gamma$ for the ablation study of training stability;
2) formulation and objective function of Bayesian CycleGAN;
3) VAE-like encoder to produce SFM and leverage it for diversified generating.

\vspace{-1em}
\subsection{Problem Setup and Cyclic framework}

Given two domains $X$ and $Y$ with associated datasets $\mathcal{X} = \{x^{(i)}\}$ and $\mathcal{Y} = \{y^{(j)}\}$,
the goal is to learn the mappings between them using unpaired samples from marginal distribution $p_{X}(x)$ and $p_{Y}(y)$ in each domain.

\begin{figure*}
\centering
\includegraphics[width=\linewidth]{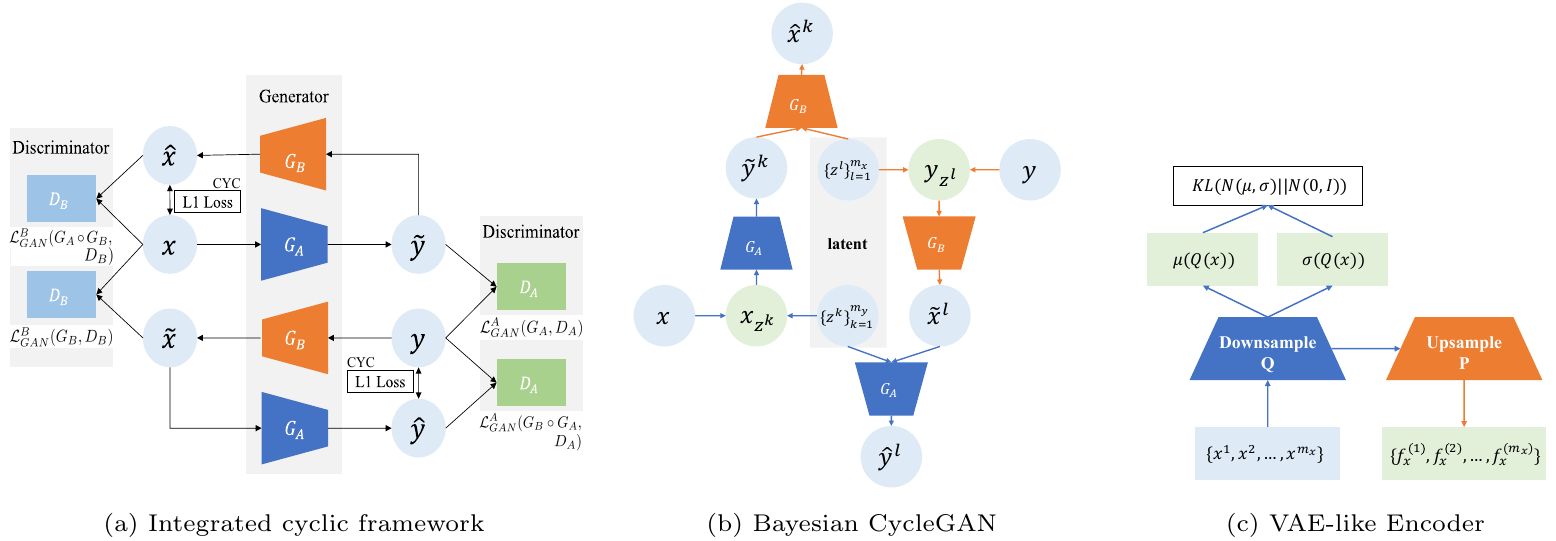}
\caption{
(a) Integrated cyclic framework: We explore the integrated cyclic framework by adding GAN loss between reconstructed images and source images.
(b) Bayesian CycleGAN: We deduce the full posteriors over parameters and adopt iterative MAP optimization along with the latent sampling to update networks. (c) VAE-like Encoder: We use this encoder to generate SFM latent variable which can be replaced in the inference process to get diversified outputs.}
\label{model}
\end{figure*}

\noindent
\textbf{Cyclic GANs} To achieve above goal, cyclic GANs are proposed to leverage GANs to represent corresponding mappings. They try to estimate the conditional distributions $p(y|x)$ and $p(x|y)$ based on samples from marginals. Specifically, cyclic GANs use two generators for estimating the conditions: $G_A(x; \theta_{ga}): X \rightarrow Y$ and $G_B(y; \theta_{gb}): Y \rightarrow X$.
Training these generators for accurately modeling the true data distribution requires cyclic framework to satisfy two constraints:
\begin{itemize}
    \item \underline{\textit{Marginal matching}}: the output samples of each mapping function (generator) should match target domain under the supervise of discriminator:
    \begin{align}
        \mathcal{L}_{GAN}^A(G_A, D_A) & = \mathbb{E}_{y \sim p_{Y}(y)} [\log D_A(y)] \\ \nonumber
        & + \, \mathbb{E}_{x \sim p_X(x)} [\log(1-D_A(G_A(x)))].
    \end{align}
    A similar adversarial loss $\mathcal{L}_{GAN}^{B}(G_B, D_B)$ is defined for marginal matching in reverse direction.
    \item \underline{\textit{Cycle-consistency}}: mapping the generated samples back to reconstructed end should produce images close to marginal distribution in source domain under the control of cycle-consistency loss:
    \begin{align}
        \mathcal{L}_{CYC}^A(G_A, G_B) = \mathbb{E}_{x \sim p_{X}(x)} \| G_B(G_A(x)) - x \|_1.
    \end{align}
    Similar cycle-consistency loss $\mathcal{L}_{CYC}^B(G_B, G_A)$ for another inverse cycle $y \rightarrow x \rightarrow y$.
\end{itemize}
The total loss function of cyclic GANs is:
\begin{align}
    \mathcal{L}(G_A&, G_B, D_A, D_B) \\ \nonumber
    &= \mathcal{L}_{GAN}^A(G_A, D_A) + \mathcal{L}_{GAN}^B(G_B, D_B) \\ \nonumber
    & + \lambda (\mathcal{L}_{CYC}^A(G_A, G_B) + \mathcal{L}_{CYC}^B(G_B, G_A)),
\end{align}
where $\lambda$ controls the relative importance of the two constraints. We aim to solve:
\vspace{-0.5em}
\begin{align}
    G_A^*, G_B^* = \text{arg} \min_{G_A, G_B} \max_{D_A, D_B} \mathcal{L}(G_A, G_B, D_A, D_B)
    \vspace{-0.5em}
\end{align}
by iteratively updating parameters of generators and discriminators using gradient back-propagation.

\noindent\textbf{Integrated Cyclic Framework}
The difference with original cyclic framework is that the integrated cyclic framework considers another cycle matching constraint:
\begin{align}
    \mathcal{L}_{GAN}^{A}(G_B \circ G_A, D_A) & = \mathbb{E}_{y \sim p_{Y}(y)} [\log D_A(y)] \\ \nonumber
        & + \, \mathbb{E}_{y \sim p_{Y}(y)} [\log(1\!-\!D_A(G_A(G_B(y))))],
\end{align}
which represents adversarial loss between reconstructed images and source images. A similar loss $\mathcal{L}_{GAN}^{B}(G_A \circ G_B, D_B)$ can be defined inversely as shown in Fig. \ref{model} (a).
Next we analyze from the judgemental view of discriminators.
For the sake of clarity, we denote $G_{A}(x; \theta_{ga})$ as $\tilde{y}$ and $G_{A}(G_{B}(y; \theta_{gb}); \theta_{ga})$ as $\hat{y}$, similar notations are used for $x$, $\tilde{x}$ and $\hat{x}$.
E.g., for discriminator $D_A$, the three parts of distribution $p_Y(y)$ have specific forms: $(y, t=1)$, $(\tilde{y}, t=0)$ and $(\hat{y}, t=0)$, in which $t$ is the label for discrimination, $t=1$ represents true data distribution while $t=0$ indicates generated distribution. The conditional likelihoods are then defined as:
\begin{align}
p( t=1 | X, Y, Z_y=y, \theta_{da}, \theta_{g}) &\propto D_A(y;\theta_{da}), \\
p( t=0 | X, Y, Z_y=\tilde{y}, \theta_{da}, \theta_{g}) &\propto  1 \! - \! D_A(\tilde{y}; \theta_{da}), \\
p( t=0 | X, Y, Z_y=\hat{y}, \theta_{da}, \theta_{g}) &\propto  1 \! - \! D_A(\hat{y};\theta_{da}).
\end{align}
The auxiliary variable is noted as $Z_y\in\{y, \tilde{y}, \hat{y}\}$. Here $Z_y$ is a multinomial
distribution for indicating the input assignments. Assume the ratio between number of pairs $|\{(y, t)\}| : |\{(\tilde{y}, t)\}| : |\{(\hat{y}, t)\}|=1+\gamma:1:\gamma$, where $\gamma$ is the introduced balance factor, indicating the coefficient of adding adversarial loss in reconstructed ends. The balance factor can be viewed as zero in original cyclic framework since there is only $l_1$ loss constraint in reconstructed ends. We prove that this variation also has a global optimum theoretically in Section \ref{theoretical_analysis}, and show that the balance factor can adjust model stability as well as enhance reconstructed learning performance in Section \ref{experiment}.

\subsection{Formulation of Bayesian CycleGAN}
\noindent\textbf{Posteriors Derivation}
The goal of the whole posterior approximation is to make generators broad and highly multimodal, each mode in the posterior over the network weights could correspond to wildly different generators, each with their own meaningful interpretations \cite{saatci2017bayesian}.
The posterior approximation therefore can be leveraged to solve the aforementioned stability issue in cyclic framework (i.e., the inherited  GANs' unbalanced objectives coupled with the lack of supervision).
We propose Bayesian CycleGAN to alleviate mode collapse by fully representing
posterior distribution of both the generators and discriminators. Fig. \ref{model} (b) shows the inference process of Bayesian CycleGAN.
Instead of sampling generators, we introduce latent space and regularized priors to formulating full posteriors.
The latent space consists of various latent variables, we discuss two kinds in this paper, here we use random noise $z$ and later we will introduce another kind of latent variables.
Specifically, we combine source images with sampled latent variables to construct variant-concatenate inputs, e.g., combine $x \in \mathbb{R}^{C \times H \times W}$ and fixed volume of sampled latent Gaussian random variable $z^{(j)} \in \mathbb{R}^{1 \times H \times W}, j=1,2 \cdots m_x$ to construct a set of input $x_{z^{(j)}} \in \mathbb{R}^{(C+1) \times H \times W}$,
in which $C$ means number of channels, $H$ and $W$ represent input spatial dimensions, $m_x$ stands for the volume of latent variables.
To articulate the posteriors of integrated cyclic framework,
we infer different sampled space that posteriors conditioned on :
$\Omega_{g} = (x, y, x_z, y_z)$,
$\Omega_{da} = (x, y, Y, x_z, y_z)$, and
$\Omega_{db} = (x, y, X, x_z, y_z)$, in which $x, y$ indicate normal input variables, $X, Y$ represent source domains while $x_z, y_z$ mean variant-concatenated inputs.
Therefore,
the posterior of discriminator $D_A$ w.r.t each sampled $x_z$
can be derived as Equ. (\ref{da_eg}). We implicitly express likelihood function and use $\propto$ for convenience since $\log$ function is monotonically increasing.
\begin{equation}\label{da_eg}
\begin{split}
& p(\theta_{da} | \Omega_{da}, t, \theta_{ga}, \theta_{gb}) \\
& \propto (D_A(y;\theta_{da}))^{(1+\gamma)} \times (1 - D_A(\hat{y};\theta_{da}))^{\gamma} \\
& \times (1 - D_A(G_A(x_z);\theta_{ga});\theta_{da}) \times p(\theta_{da} | \alpha_{da}).
\end{split}
\end{equation}
$p(\theta_{da} | \alpha_{da})$ is regularized prior over discriminator $D_A$ in which $\alpha$ means hyper-parameters for $D_A$.
Other posteriors for $\theta_{db}$ and $\theta_{g}$
(parameters of generator)
can be derived analogously.

\noindent\textbf{Bayesian Marginalization}
To get rid of $\Omega$ which is defined on the training dataset, we employ Bayesian marginalization on the derived posterior distribution of discriminator $D_A$, which allows the estimated network parameters to generalize to the inference on validation/testing dataset without distortion theoretically after fully iterative MAP (Maximum A Posteriori) training.
Here we use MAP ranther than MLE (Maximum Likelihood Estimation). Though they are both methods for estimating the distribution of target variables given input data and the model,
MLE suffers from the problem that it overfits the data while MAP sovles this problem by adding priori regularization \cite{rynkiewicz2012general,vehtari2000bayesian}. We also show the difference in Equ. (\ref{MLE_MAP}), assume we have likelihood function $P(X|\theta)$, then MLE/MAP aims to estimate the parameter $\theta$ by maximizing corresponding objectives:
\begin{equation}\label{MLE_MAP}
\begin{split}
    \theta_{MLE} &= \text{arg} \max_{\theta} P(X|\theta) = \text{arg} \max_{\theta} \prod_i P(x_i|\theta) \\
    \theta_{MAP} &= \text{arg} \max_{\theta} P(X|\theta) P(\theta) \\
    & = \text{arg} \max_{\theta} \prod_i P(x_i|\theta) + \log P(\theta).
\end{split}
\end{equation}
Build on the above background information, we next formulate the marginalization process, taking the posterior estimation of discriminator $D_A$: $p(\theta_{da} | \Omega_{da}, t, \theta_{ga}, \theta_{gb})$ as an example:
\begin{equation}\label{marginalization}
\begin{split}
   & p(\theta_{da} | t, \theta_{ga}, \theta_{gb}) \\
   & = \! \int \! p(\theta_{da} , \Omega_{da} | t, \theta_{ga}, \theta_{gb}) \! \,d\Omega_{da}  \\
    \! & = \! \int \! p(\theta_{da} | \Omega_{da}, t, \theta_{ga}, \theta_{gb}) \! \cdot \! p(\Omega_{da} | \theta_{ga}, \theta_{gb}) \! \,d\Omega_{da} \\
    \! & = \! \int \!\! p(\theta_{da} | \Omega_{da}, t, \theta_{g}) \! \cdot \! p(x, y, x_z, y_z) \! \cdot \! p(Y | x, y, x_z, y_z)  d\Omega_{da} \\
    \! & = \sum_{i=1}^{n_x} \sum_{j=1}^{n_y} \sum_{k=1}^{m_y} \sum_{l=1}^{m_x} p(\theta_{da} | \Omega_{da}, t, \theta_{ga}, \theta_{gb}), \nonumber
\end{split}
\end{equation}
in which $n_x, n_y$ represent the batch size for training process; $m_x, m_y$ indicate fixed latent sampling volume.
Built upon over above marginalization, the posterior for discriminator $D_A$ can be rewritten as Equ. (\ref{da}).
Note the continuous addition has been converted to continuous multiplication since we implicitly ignore $\log$ function by using $\propto$.
\begin{equation}\label{da}
\begin{split}
  & p(\theta_{da} | t, \theta_{ga}, \theta_{gb}) \\
  & \! \propto \! \prod_{j=1}^{n_y} \! (D_{A}(y^{(j)}; \theta_{da}))^{(1+\gamma)m_{x}} \! \times \! \prod_{j=1}^{n_y} \prod_{l=1}^{m_{x}} \! (1 \! - \! D_{A}(\hat{y}^{(j,l)}; \theta_{da}))^{\gamma} \\
  & \times \prod_{i=1}^{n_x} \prod_{k=1}^{m_{y}} (1-D_{A}(G_{A}(x_{z}^{(i,k)}; \theta_{ga}); \theta_{da})) \times p(\theta_{da} | \alpha_{da}).
\end{split}
\end{equation}
Then the embedded loss function for discriminator $D_A$ can be formulated as:
\begin{sequation}
\begin{split}
    & \mathcal{L}_{GAN}(D_A) \! = \! (1+\gamma) m_x \mathbb{E}_{y \sim p_{y}} [\sum_{j=1}^{n_y}\! \log D_A(y^{(j)})]\! + \! \| \theta_{da} \|_1^2 \!+ \!\mathbb{E}_{y \sim p_{\tilde{y}}} \\
    & [\sum_{i=1}^{n_x}\!\sum_{k=1}^{m_y} \! \log (1\!-\!D_A(\tilde{y}^{(i,k)}))] \! + \! \gamma \mathbb{E}_{y \sim p_{\hat{y}}} [\sum_{j=1}^{n_y}\!\sum_{l=1}^{m_x} \! \log (1\!-\!D_A(\hat{y}^{(j,l)}))].
\end{split}
\end{sequation}
Similarly, we can derive the posterior for $\theta_{db}$ as Equ. (\ref{db}).
\begin{equation}\label{db}
\begin{split}
   & p(\theta_{db} | t, \theta_{ga}, \theta_{gb}) \\
   & \! \propto \! \prod_{i=1}^{n_x} (D_{B}(x^{(i)}; \theta_{db}))^{(1+\gamma)m_{y}} \! \times \! \prod_{i=1}^{n_x} \prod_{k=1}^{m_{y}} \! (1 \! - \! D_{B}(\hat{x}^{(i,k)}; \theta_{db}))^{\gamma} \\
   & \times \prod_{j=1}^{n_y} \prod_{l=1}^{m_{x}} (1-D_{B}(G_{B}(y_{z}^{(j,l)}; \theta_{gb}); \theta_{db})) \times p(\theta_{db} | \alpha_{db}).
\end{split}
\end{equation}
Then the loss function for discriminator $D_B$ is:
\begin{sequation}
\begin{split}
    & \mathcal{L}_{GAN}(D_B) \! = \! (1+\gamma) m_y \mathbb{E}_{y \sim p_{y}} [\sum_{i=1}^{n_x}\! \log D_B(y^{(i)})]\! + \! \| \theta_{db} \|_1^2 \!+ \!\mathbb{E}_{y \sim p_{\tilde{y}}} \\
    & [\sum_{j=1}^{n_y}\!\sum_{l=1}^{m_x} \! \log (1\!-\!D_B(\tilde{y}^{(j,l)}))] \! + \! \gamma \mathbb{E}_{y \sim p_{\hat{y}}} [\sum_{i=1}^{n_x}\!\sum_{k=1}^{m_y} \! \log (1\!-\!D_B(\hat{y}^{(i,k)}))].
\end{split}
\end{sequation}
For the posterior estimation of generator, we generate estimated samples without needing to consider discriminative label $t$.
Based on the same conditional likelihood form defined above,
we are then able to formulate the posterior of generator $G_A$ and $G_B$, taking both marginal matching and cycle consistency constraints into consideration:
\begin{equation}\label{ga-gb}
\begin{split}
  & p(\theta_{ga}, \theta_{gb} | \theta_{da}, \theta_{db}) \\
  & \!\! \propto \!\! \left(\prod_{i=1}^{n_x} \prod_{k=1}^{m_{y}} \! D_A( G_A(x_{z}^{(i,k)})) \!\! \times \! (D_B(\hat{x}^{(i,k)}))^{\gamma} \!\! \cdot \! e^{- \lambda \left\| \hat{x}^{(i,k)} \! - \! x^{(i)} \right\|} \! \right) \\
  & \!\! \times \!\! \, \left(\prod_{j=1}^{n_y} \prod_{l=1}^{m_{x}} \! D_B( G_B(y_{z}^{(j,l)})) \! \times \! (D_A(\hat{y}^{(j,l)}))^{\gamma} \! \cdot \! e^{- \lambda \left\| \hat{y}^{(j,l)} \! - \! y^{(i)} \right\|} \! \right) \\
  & \times p(\theta_{ga} | \alpha_{ga})  \times p(\theta_{gb} | \alpha_{gb}).
\end{split}
\end{equation}
Corresponding Loss function for cascade Generator $G_A, G_B$ can then be formulated as:
\begin{sequation}
\begin{split}
    & \mathcal{L}_{GAN}(G) = \mathbb{E}_{x \sim p_{x}} [\sum_{i=1}^{n_x}\!\sum_{k=1}^{m_y} \! \log \! D_A(x^{(i,k)})] + \| \theta_{ga} \|_1^2 + \| \theta_{gb} \|_1^2 + \\
    & \mathbb{E}_{x\sim p_{\hat{x}}} [\sum_{i=1}^{n_x}\!\sum_{k=1}^{m_y} \log D_B(\hat{x}^{(i,k)})] + \mathbb{E}_{y \sim p_{y}} [\sum_{j=1}^{n_y}\!\sum_{l=1}^{m_x} \log D_B(y^{(j,l)})] + \\
    & \mathbb{E}_{y \sim p_{\hat{y}}} [\sum_{j=1}^{n_y}\sum_{l=1}^{m_x} \log D_A(\hat{y}^{(j,l)})] + \lambda (\| \hat{x}^{(i,k)} \! - \! x^{(i)} \| \! + \! \|\hat{y}^{(j,l)} \! - \! y^{(i)} \|).
\end{split}
\end{sequation}
\noindent\textbf{Posteriors Derivation of LS-GAN}
To reduce model oscillation
, we follow the original CycleGAN to replace the negative log likelihood loss function with a least square adversarial loss, shown in Equ. (\ref{ls-objective}).
\begin{equation}\label{ls-objective}
    \begin{split}
        \min_D & \frac{1}{2} \mathbb{E}_{x \sim p_{data}(x)}[(D(x) - 1)^2] + \frac{1}{2} \mathbb{E}_{z \sim p_z(z)}[D(G(z))^2] \\
        \min_G & \frac{1}{2} \mathbb{E}_{z \sim p_z(z)}[(D(G(z)) - 1)^2].
    \end{split}
\end{equation}
We formulate the alternative posteriors by applying the combination of $l_1$ loss and least square adversarial loss to proposed Bayesian CycleGAN. The corresponding derivations, including integrated cyclic framework analysis, Bayesian marginalization and posterior deduction, remain same as above formulation based on standard adversarial loss. The detailed estimation of posteriors based on $l_1$-least square adversarial loss are derived as the following Equ. (\ref{da-ls}),(\ref{db-ls}),(\ref{ga-gb-ls}).

\begin{equation}\label{da-ls}
\begin{split}
   & p(\theta_{da} | t, \theta_{ga}, \theta_{gb}) \\
   & \!\! \propto \!\! \prod_{j=1}^{n_y} \! e^{-(1+\gamma) \cdot m_x \cdot (D_{A}(y^{(j)}; \theta_{da}) - 1)^{2}} \!\! \times \!\! \prod_{j=1}^{n_y} \prod_{l=1}^{m_{x}} \! e^{-\gamma \cdot (D_{A}( \hat{y}^{(j,l)}; \theta_{da}))^{2}} \\
   & \!\! \times \! \prod_{i=1}^{n_x} \prod_{k=1}^{m_y} e^{-(D_{A}( G_{A}(x_{z}^{(i,k)}; \theta_{ga}); \theta_{da}))^{2}} \times p(\theta_{da} | \alpha_{da}).
\end{split}
\end{equation}

\begin{equation}\label{db-ls}
\begin{split}
    & p(\theta_{db} | t, \theta_{ga}, \theta_{gb}) \\
    & \!\! \propto \!\! \prod_{i=1}^{n_x} \! e^{-(1+\gamma) \cdot m_y \cdot (D_{B}(x^{(i)}; \theta_{db}) - 1)^{2}} \!\! \times \!\! \prod_{i=1}^{n_x} \prod_{k=1}^{m_y} e^{-\gamma \cdot (D_{B}( \hat{x}^{(i,k)}; \theta_{db}))^{2}} \\
    & \!\! \times \! \prod_{j=1}^{n_y} \prod_{l=1}^{m_{x}} e^{-(D_{B}( G_{B}(y_{z}^{(j,l)}; \theta_{gb}); \theta_{db}))^{2}} \times p(\theta_{db} | \alpha_{db}).
\end{split}
\end{equation}

\begin{equation}\label{ga-gb-ls}
\begin{split}
   & p(\theta_{ga}, \theta_{gb} | \theta_{da}, \theta_{db}) \\
   & \!\! \propto \!\! \prod_{i=1}^{n_x} \prod_{k=1}^{m_y} \!\! e^{-(D_{A}( G_{A}(x_{z}^{(i,k)})) - 1)^{2} - \gamma (D_{B}( \hat{x}^{(i,k)}) - 1)^{2} - \lambda \left\| \hat{x}^{(i,k)} - x^{(i)} \right\|} \\
     & \!\! \times \! \prod_{j=1}^{n_y} \prod_{l=1}^{m_x} \! e^{-(D_{B}( G_{B}(y_{z}^{(j,l)})) - 1)^{2} - \gamma (D_{A}( \hat{y}^{(j,l)}) - 1)^{2} - \lambda \left\| \hat{y}^{(j,l)} - y^{(i)} \right\|} \\
     & \times p(\theta_{ga} | \alpha_{ga}) \times p(\theta_{gb} | \alpha_{gb}).
\end{split}
\end{equation}
Corresponding loss functions for each network can also be derived by applying $\log$ to each side. In practice we adopt \textbf{LS-GAN loss} and iterative MAP to update their posteriors.

\subsection{Restricted Latent Sampling}
\label{restricted_latent_sampling}
To break the curse of being deterministic mapping in current cyclic frameworks (\textit{Assumption 2}), we propose a restricted latent sampling method in formulated Bayesian CycleGAN scheme to overcome the fundamental flaw of cyclic frameworks: the cycle-consistency encourages the mapping to ignore the latent variable \cite{almahairi2018augmented}.
Specifically, we use only one restricted part of latent space (denoted as \emph{statistic feature map (SFM)}) to fit training, and therefore remaining latent distributions allow us to generate diversified images during inference.
The proposed restricted latent sampling method serves as two main functions
: 1) achieve diversified generating; 2) help to stabilize training process of Bayesian CycleGAN.

Next we will introduce how to generate SFM.
As mentioned before, we use two kinds of latent variables in this paper to constitute latent space, respectively, and have clarified the first kind of random noise. In the experiments we also adopt a VAE-like encoder to generate the SFM $f_x^{(j)} \sim E(x)$ and $f_y^{(j)} \sim E(y)$ to serve as another kind of latent variables, which also accounts for a fraction of overall latent space (relatively restricted as compared to the general random noise).
The VAE-like network $E$ is illustrated in Fig. \ref{model} (c),
which has two differences with variational auto-encoder (VAE) network: 1) $E$ uses down-sample layers $Q$ and up-sample layers $P$ to generate SFM while VAE network adopts encoder and decoder to recover source domain distribution; 2) $E$ is constrainted by $\mathcal{KL}$ loss lying in the bottleneck layer between $Q$ and $P$, which encourages outputs close to Gaussian distribution and thus ensure the generated SFM inferior to source inputs, while VAE is under the constraint of L1 loss between inputs and outputs, making the output distribution close to source domain distribution and therefore dominate the whole latent space.

Similar with previous Bayesian formulation of using Gaussian latent sampling, we keep the basic format of posteriors and derived loss functions. The only difference is to replace the random latent variable $z$ with SFM ($f_{x/y}$).
Specifically, concatenate real source input $x$ with restricted latent variable $f_y$ while concatenate source input $y$ with $f_x$ to construct variant-concatenated input $x_f, y_f$.
In practice we use 3 SFMs for sampling posteriors to achieve trade-off because it already stabilizes training process, more SFMs slightly favor the quality of generated images while incurring extra overhead.
we show: 1) the generated diversified samples using restricted latent sampling method in Monet2Photo experiment of Section \ref{experiment}; 2) the improved stability using restricted latent space to conduct Bayesian CycleGAN in Section \ref{experiment}-C.

\vspace{-1em}
\subsection{Algorithm for Bayesian CycleGAN}
We marginalize the posteriors over generators and discriminators shown in Equ. (\ref{da}), (\ref{db}) and (\ref{ga-gb}) for standard GANs objective based integrated framework or Equ. (\ref{da-ls}), (\ref{db-ls}) and (\ref{ga-gb-ls}) for least square GANs objective based integrated framework. All of deduced posteriors are responsible for optimization and accompanied by a regularization item as prior $p(\theta | \alpha)$. The iterative optimization process illustrates the confrontation between discriminators and generators. Detailed procedures for each iteration are shown in Algorithm \ref{alg:AdamMC}.

\begin{algorithm}[!htbp]
\caption{MAP with Monte Carlo latent sampling for training Bayesian CycleGAN. This is one iteration of updating parameters of generators and discriminators.}
\begin{algorithmic}
\label{alg:AdamMC}
\STATE{$\bullet$ Get parameters $\{\theta_{ea}\}$, $\{\theta_{eb}\}$, $\{\theta_{ga}\}$, $\{\theta_{gb}\}$, $\{\theta_{da}\}$ and $\{\theta_{db}\}$from previous iteration.}
\FOR{number of mini-batch $n$ in one iteration}
{
    \STATE{$\bullet$ Extract $m_x$ samples $x^{(1)}, \dots, x^{(m_x)}$ from $p_x(x)$, each $x^{l}$ has $n$ samples. Use encoder $E_A$ to get SFM $\{f_x^{l}\}_{l=1}^{m_x}$. Then, combine $f_x$ with batch $y^{j}$ to get $y_f^{(j)}$.}
    \STATE{$\bullet$ Extract $m_y$ samples $y^{(k)}, \dots, y^{(m_y)}$ from $p_y(y)$, each $y^{k}$ has $n$ samples. Use encoder $E_B$ to get SFM $\{f_y^{k}\}_{k=1}^{m_y}$. Then, combine $f_y$ with batch $x^{i}$ to get $x_f^{(i)}$.}
    \STATE{$\bullet$ Generate $\tilde{x}$ and $\tilde{y}$ by use of $G_A$ and $G_B$, which take $x_f^{(i)}$ and $y_f^{(j)}$ as input variables. Then combine the fake outputs with SFM $f_x^{m_x}$ and $f_y^{m_y}$ extracted last steps as input to reconstruct cyclic results $\hat{x}^{(i,k)}$ and $\hat{y}^{(j,l)}$ with marginal distribution.}
     \STATE{$\bullet$ Update generators by ascending its posteriors \! $p(\theta_{ga} , \theta_{gb} | \theta_{da}, \theta_{db})$
        \begin{align*}
            \nabla_{\theta_{g}, \theta_{e}} \! \left( \! \frac{\partial \! \log p(\theta_{ga}, \theta_{gb} | \theta_{da}, \theta_{db})}{\partial \theta_{g}} \! + \! \frac{\partial \lambda_{KL} \mathcal{L}_{KL}}{\partial \theta_{e}} \! \right).
        \end{align*} }
    \STATE{$\bullet$ Update discriminator by ascending its posteriors $p(\theta_{da} | t, \theta_{ga}, \theta_{gb})$
        \begin{align} \nabla_{\theta_{da}}\left(\frac{\partial \log p(\theta_{da} | t, \theta_{ga}, \theta_{gb})}{\partial \theta_{da}}\right). \nonumber
        \end{align}}
    \STATE{$\bullet$ Update discriminator by ascending its posteriors $p(\theta_{db} | t, \theta_{ga}, \theta_{gb})$
        \begin{align} \nabla_{\theta_{db}}\left(\frac{\partial \log p(\theta_{db} | t, \theta_{ga}, \theta_{gb})}{\partial \theta_{db}}\right). \nonumber
        \end{align}}
}
\ENDFOR
\end{algorithmic}
\end{algorithm}

\section{Experiments and Evaluations}
\label{experiment}
We first describe the used benchmark datasets and implementation details. Then we perform comparisons among original CycleGAN, Bayesian CycleGAN and several variants, and then analyze how they support our claims and theorems, respectively. In addition, we show the enhanced reconstructed learning effect benefiting from the introduction of balance factor, and the diversified effect resulting from replacing the restricted latent variables during inference process.

\subsection{Dataset}
We evaluate the proposed Bayesian CycleGAN on three widely used benchmarks, Cityscapes \cite{cordts2016cityscapes}, Maps and Monet2Photo \cite{CycleGAN2017}. We use Cityscapes for learning mapping between realistic photos and semantic labels, with 2975 training images from training set and 500 testing images from validation set. Each picture is resized to $128\times256$. We use Maps for mapping learning between aerial photographs and maps scraped from Google Maps, with 1096 training images and 1098 testing images. Each picture is resized to $256\times256$. We use Monet2Photo for mapping learning between landscapes and artworks (style transfer), with 1074 Monet artworks and 6853 landscapes.

\subsection{Implementation Details}
We employ the same network architectures for generators and discriminators and same hyper-parameters for training process.
The generator consists of 6 basic residual blocks \cite{he2016deep} between two down-sample layers and two up-sample layers, assembled by instance normalization. The discriminator consists of several convolutional layers.
In the training process, we use ADAM \cite{kingma2014adam} with learning rate 0.0002 for the first 50 epochs, the learning rate will nonlinearly decay to zero periodically for the next 50 epochs, momentums were set as $\beta_1=0.5$ and $\beta_2=0.999$. Weights for networks are initialized as Gaussian distribution with zero mean and standard deviation 0.02. The other hyper-parameters, default values are $\lambda=10$ and $\lambda_{KL}=0.1$. Besides, we set Monte Carlo latent sample numbers as 3 for both $f_x$ and $f_y$ for fixed volume variant concatenation.

\begin{figure*}
\begin{minipage}[t]{0.5\linewidth}
    \begin{flushleft}
    \includegraphics[width=0.98\linewidth,height=7cm]{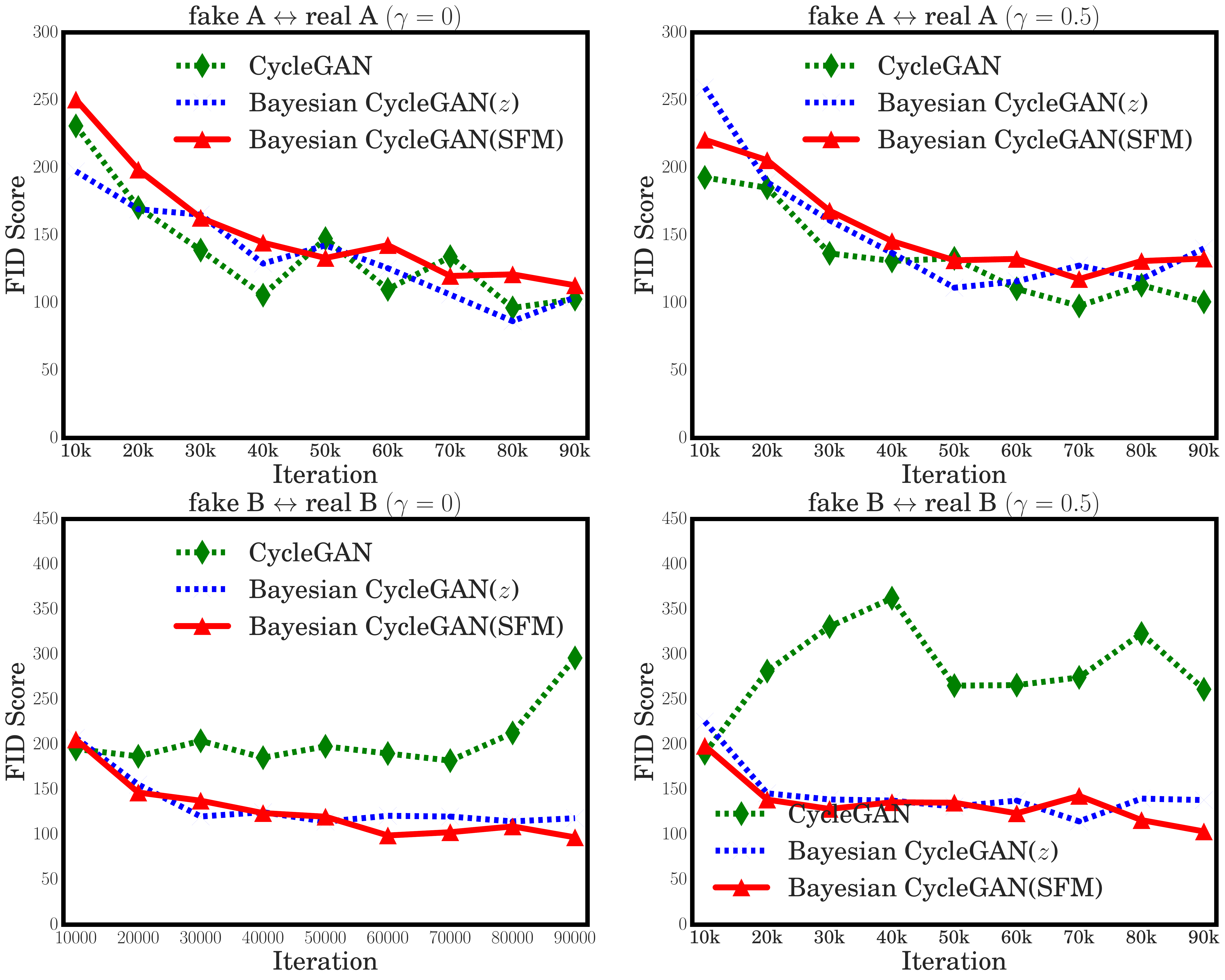}
    \end{flushleft}
\end{minipage}
\begin{minipage}[t]{0.5\linewidth}
    \begin{flushright}
    \includegraphics[width=0.98\linewidth,height=7cm]{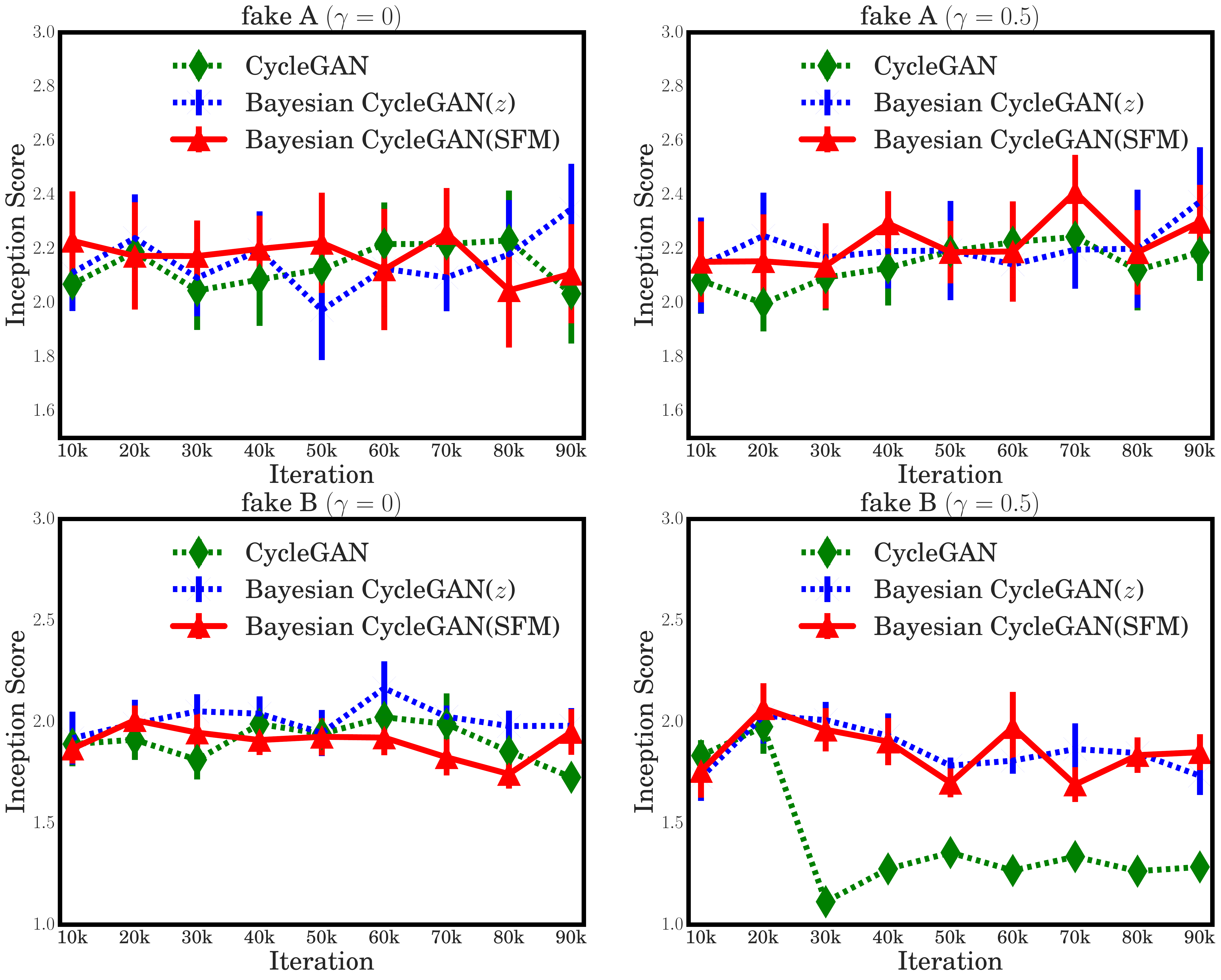}
    \end{flushright}
\end{minipage}
\caption{
Left two columns show FID Scores to estimate the \emph{statistical distance} between source images and generated ones during different iterations. Right two columns show Inception Scores of different iterations with mean value and standard deviation. In all above figures: $A$ represents the streetscapes; $B$ signifies semantic segmentation labels.
}
\vspace{-1em}
\label{fid-inception}
\end{figure*}

\begin{figure}[!htbp]
    \centering
    \includegraphics[width=\linewidth]{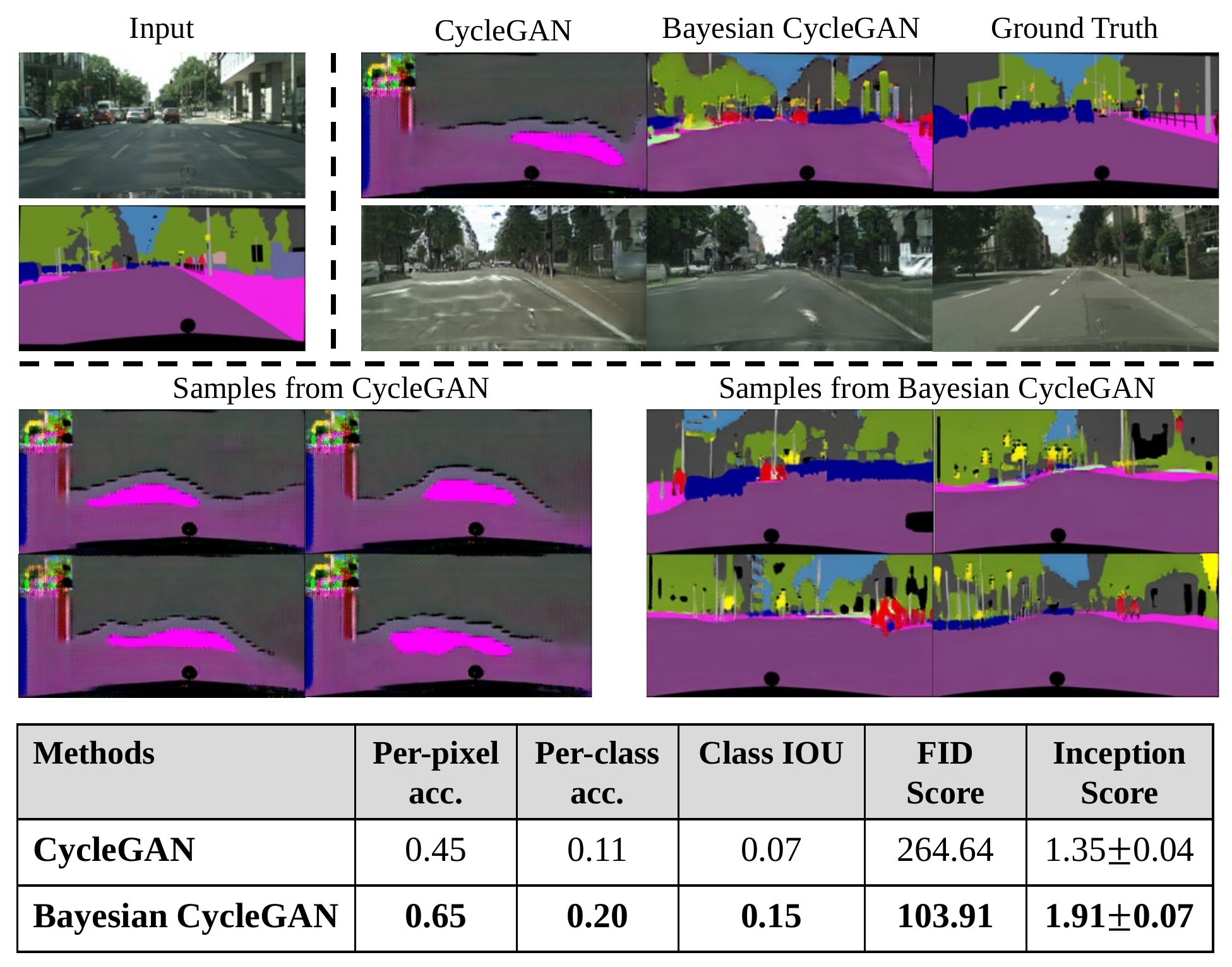}
\caption{
Comparison of stability: Results for mapping $label \leftrightarrow photo$ trained on cityscapes under condition $\gamma=0.5$. The upper panels show the translation results while the bottom panels show generated label samples from different methods.
}
\label{fig:cityscapes}
\end{figure}

\begin{figure}[t]
\centering
\includegraphics[width=\linewidth]{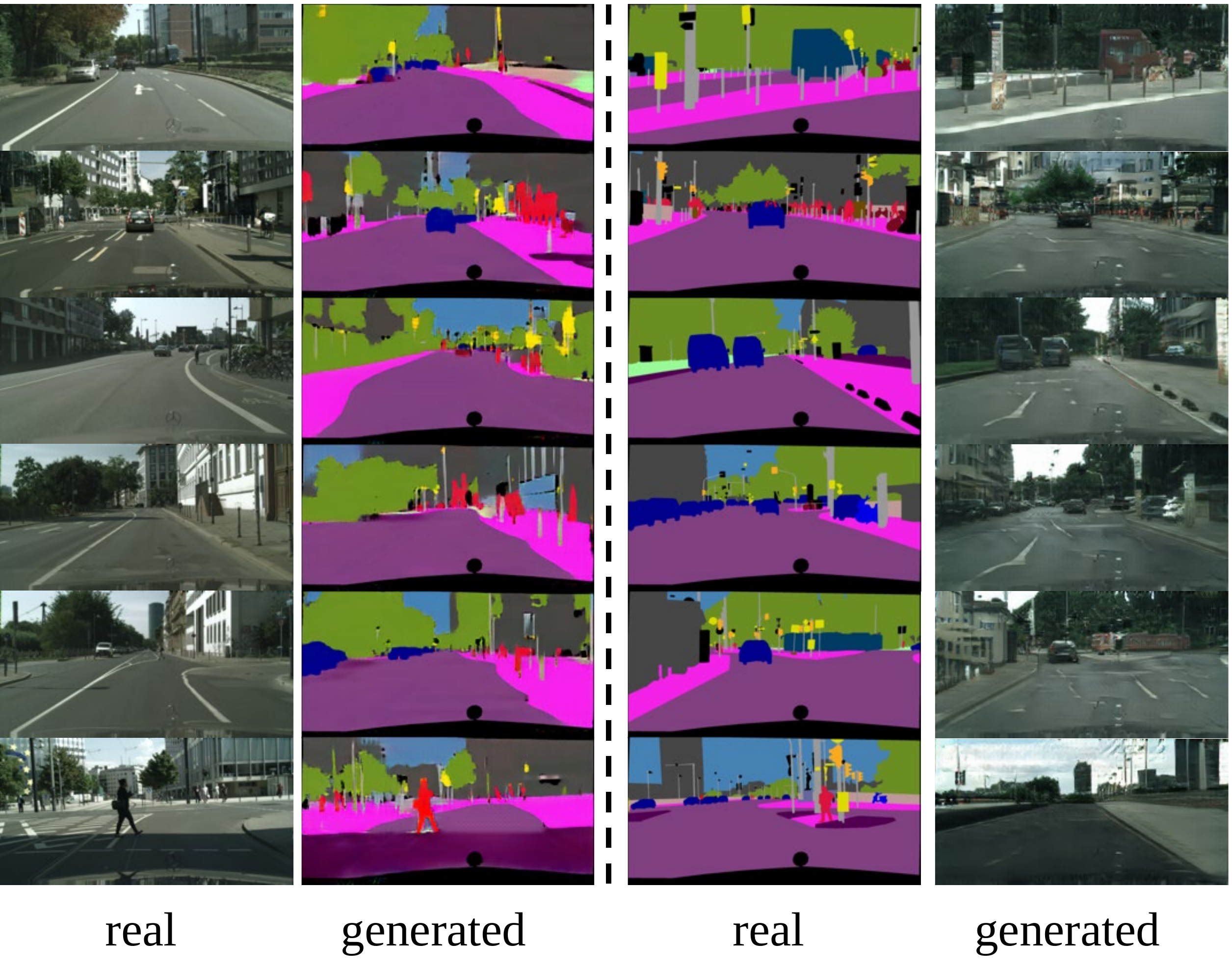}
\caption{Qualitative results sampling form Bayesian cyclic model in unsupervised setting under condition $\gamma=0$.}
\label{fig:cityscapes_2}
\end{figure}

\subsection{Comparison Based on Standard Metrics}
\label{sec:compare}

\noindent
\textbf{Evaluation Metrics} We evaluate the quality of generated images and statistical distance between generated images and source images using the metrics below:
\begin{itemize}
    \item \underline{\textit{Inception Score (IS)}}: the metric for measuring the diversity and quality of generated samples. Salimans et al. \cite{salimans2016improved} applies the pretrained Inception model \cite{szegedy2016rethinking} to generate samples and then compares conditional label distribution with the marginal label distribution: $\text{exp}(\mathbb{E}_{x \sim p_{g}} D_{KL}(p(y|x) \| p(y)))$. Generating samples with meaningful objects reduces the entropy of conditional label distribution while generating diverse images increase the entropy of marginal label distribution. Thus higher scores are better, corresponding to a larger KL-divergence between the two distributions (meaningful and diverse).
    \item \underline{\textit{FID Score}}: Heusel et al. \cite{heusel2017gans} propose a complementary metric of IS called ``Fre{\'c}het Inception Distance (FID)'' for better capturing the similarity of generated images to real ones. Thus lower FID score is better, corresponding to more similar real and generated samples.
    \item \underline{\textit{FCN Scores}}: We use the standard metrics from the Cityscapes benchmark \cite{cordts2016cityscapes} to evaluate the performance of photo $\rightarrow$ label translation following CycleGAN \cite{CycleGAN2017}, including per-pixel accuracy, per-class accuracy, and mean class Intersection-Over-Union (IOU).
\end{itemize}

\noindent\textbf{Photo $\leftrightarrow$ Label Translation}
We adopt Cityscapes dataset representing semantic-level domain translation to evaluate the stability and performance of proposed Bayesian CycleGAN,
the label domain denotes the semantic segmentation.
We first evaluate different methods using FID Score and IS in both original cyclic framework $(\gamma=0)$ and proposed integrated cyclic framework $(\gamma > 0)$, the comparison are shown in Fig. \ref{fid-inception} (similar analysis applied to semi-supervised setting Fig. \ref{fig:semi_fid_inception}),
from which we find that various methods achieve similar performance
(low FID Score and high IS)
for easy task $label \rightarrow photo$
(upper panels).
However, for semantic segmentation task $photo \rightarrow label$ (bottom panels), Bayesian CycleGAN outperforms (lower FID Score and higher IS) original CycleGAN in both generated diversity/quality and training stability.
The difference between the up (label $\rightarrow$ photo) and bottom (photo $\rightarrow$ label) subfigures in Fig. 3 attributes to that \textbf{GANs are not equally effective for different translation tasks}. As Xue et al.~\cite{xue2018segan} indicates, ``When inputs to the discriminator are generated vs. ground truth dense pixel-wise label maps as in the segmentation task, the real/fake classification task is too easy for the discriminator and a trivial solution is found quickly. As a result, no sufficient gradients can flow through the discriminator to improve the training of generator.'' So the ``photo $\rightarrow$ label'' task provides a harder scenario for GANs to balance generator and discriminator, which is the reason why Bayesian CycleGAN performs better in the bottom task.
As mentioned before, balance factor is introduced to adjust the stability of cyclic framework by accelerating the learning process of discriminator.
Therefore, to gain more insights of the improved training stability, we conduct ablation studies of introduced balance factor and visualize the samples generating from different methods in Fig. \ref{fig:cityscapes}. We can see that CycleGAN already suffers from mode collapse under $\gamma = 0.5$ scenario, generating identical and meaningless label maps given any input streetscape; while Bayesian CycleGAN still works well. The comparison between them demonstrates that proposed Bayesian CycleGAN can enhance generator training by employing posterior sampling (use more diverse inputs to boost robustness) and regularized priors (add regularizaiton to avoid over-fitting), resistant to crazy learning discriminator.

\begin{figure*}[t]
    \begin{minipage}[t]{0.7\linewidth}
        \includegraphics[width=\linewidth, height=3.5cm]{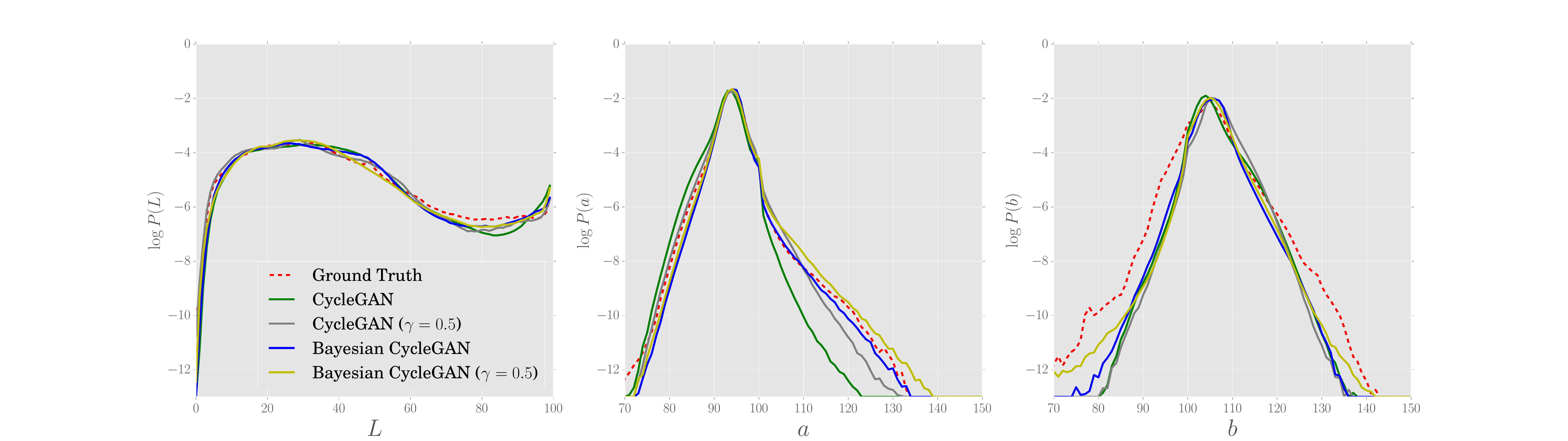}
    \end{minipage}
    \begin{minipage}[t]{0.3\linewidth}
        \vspace{-3cm}
        \centerline{Histogram intersection}
        \centerline{against ground truth}
        \resizebox{\linewidth}{10mm}{
            \begin{tabular}{|c|l|c|c|c|c|}
                \hline
                $\gamma$ & \textbf{Methods} & \textbf{L} & \textbf{a} & \textbf{b} \\ \hline\hline
                \multirow{2}*{0}
                 & CycleGAN & 0.97 & 0.81 & 0.78 \\
                 & Bayesian CycleGAN & {\bf 0.99} & {\bf 0.98} & {\bf 0.80} \\
                \hline
                \multirow{2}*{0.5}
                 & CycleGAN & 0.98 & 0.92 & 0.75 \\
                 & Bayesian CycleGAN & {\bf 0.98} & {\bf 0.97} & {\bf 0.84} \\
                \hline
            \end{tabular} }
    \end{minipage}
    \caption{
    Left: Color distribution in Lab color space matching property of different methods, tested on Cityscapes. We show the log probability for emphasizing the differences in low probability regions. Right: Histogram intersection against ground truth.}
    \label{fig:histogram}
\end{figure*}

We also adopt automatic quantitative FCN Scores (Cityscapes Segmentation Metrics) for evaluating $photo \rightarrow label$ task. The results are shown in Table \ref{cityscapes}. Proposed Bayesian CycleGAN outperforms the CycleGAN in different degrees of difficulty for stabilizing training
(i.e., different balance factor). It shows a significant improvement over semantic segmentation task without supervised information, achieving \textbf{15\%} gain on per-pixel accuracy and \textbf{4\%} gain on class IOU as compared to CycleGAN.
Also, when it comes to integrated cyclic framework where balance factor makes training more difficult to be stable, Bayesian model shows greater resilience to model collapse and improves \textbf{20\%} for per-pixel accuracy and \textbf{8\%} for Class IOU. This performance demonstrates the superior stability of proposed method.
In addition to the quantitative results, we provide some sampled qualitative results shown in Fig. \ref{fig:cityscapes_2}.

\begin{figure}[!htbp]
\begin{center}
    \includegraphics[width=\linewidth]{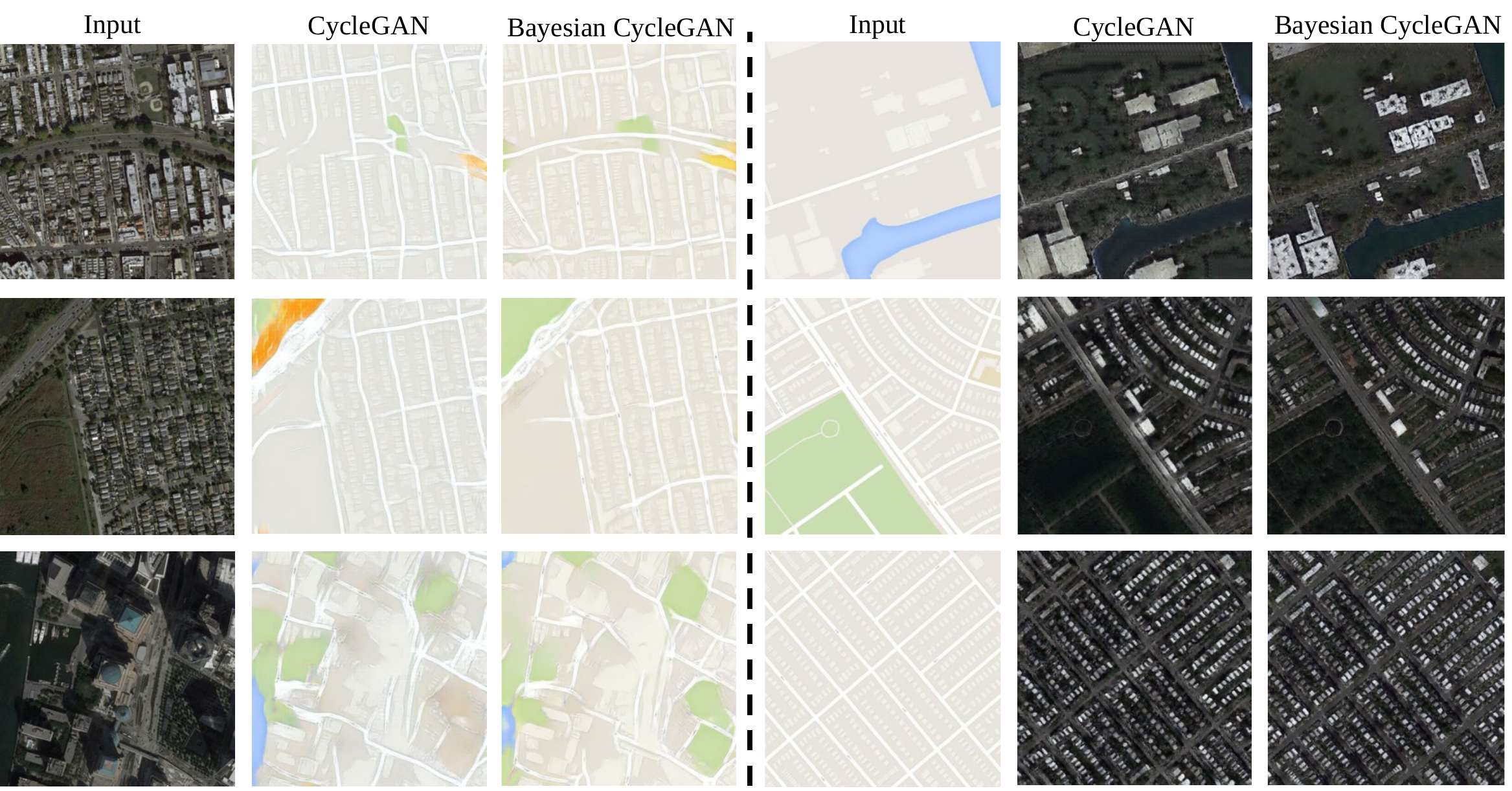}
\end{center}
\caption{Maps $\leftrightarrow$ Aerial: qualitative results conditioned on $\!\gamma\!=\!0$.}
\label{fig:maps}
\end{figure}

\begin{table}[!htbp]
\begin{center}
\resizebox{\linewidth}{15mm} {
\begin{tabular}{|c|l|c|c|c|c|}
\hline
$\gamma$ & \textbf{Methods} & \textbf{Per-pixel acc.} & \textbf{Per-class acc.} & \textbf{Class IOU} \\ \hline\hline
\multirow{4}*{0}
 & CycleGAN & 0.48 & 0.18 & 0.11 \\
 & CycleGAN (dropout) & 0.56 & 0.18 & 0.12 \\
 & CycleGAN (buffer) & 0.58 & 0.22 & 0.16 \\
 & Bayesian CycleGAN & {\bf 0.73} & {\bf 0.27} & {\bf 0.20} \\
\hline
\multirow{3}*{0.5}
 & CycleGAN & 0.45 & 0.11 & 0.07 \\
 & CycleGAN (dropout) & 0.59 & 0.16 & 0.11 \\
 & Bayesian CycleGAN & {\bf 0.65} & {\bf 0.20} & {\bf 0.15} \\
\hline
& Pix2pix (Supervised) & 0.85 & 0.40 & 0.32 \\
\hline
\end{tabular} }
\end{center}
\caption {FCN Scores for different methods, evaluated on photo$\rightarrow$label. \emph{Buffer} is the trick used by Zhu et al. \cite{CycleGAN2017} For fair comparison, we evaluate all methods after 200 epochs.}
\label{cityscapes}
\end{table}

\noindent\textbf{Aerial $\leftrightarrow$ Maps Translation} We evaluate different methods using FID and inception score in Table \ref{tab:maps},
from which we find that Bayesian cyclic model achieves nearly 5\% drop in FID Score metric (lower is better) and 5\% improvement in IS metric (higher is better).
In addition, the qualitative results in Fig. \ref{fig:maps} demonstrate the realistic effect of Bayesian CycleGAN.

\begin{table}[!htbp]
\begin{center}
\resizebox{\linewidth}{9mm} {
\begin{tabular}{|c|l|c|c|c|c|c|}
\hline
$\gamma$ & \textbf{Methods} & \textbf{FID A} & \textbf{FID B} & \textbf{Inception A} & \textbf{Inception B} \\ \hline\hline
\multirow{2}*{0}
 & CycleGAN & 71.56  & 172.75  & 3.50$\pm$0.25 & 2.77$\pm$0.14 \\
 & Bayesian CycleGAN & {\bf 68.45}  & \textbf{165.83} & \textbf{3.68$\pm$0.22} & \textbf{2.85$\pm$0.15} \\
\hline
\multirow{2}*{0.5}
 & CycleGAN & 77.92  & 198.26  & 3.43$\pm$0.11 & 2.56$\pm$0.16 \\
 & Bayesian CycleGAN & \textbf{72.86}  & \textbf{160.66} & \textbf{3.61$\pm$0.32} & \textbf{2.83$\pm$0.12} \\
\hline
\end{tabular} }
\end{center}
\caption {Evaluation for Maps$\leftrightarrow$Aerial mapping. FID A evaluates statistic distance between generated aerial photos and real photos, FID B is between generated maps and real maps. Inception score measures the quality of generated images.}
\label{tab:maps}
\end{table}

\begin{figure*}[t]
\begin{center}
    \includegraphics[width=\linewidth]{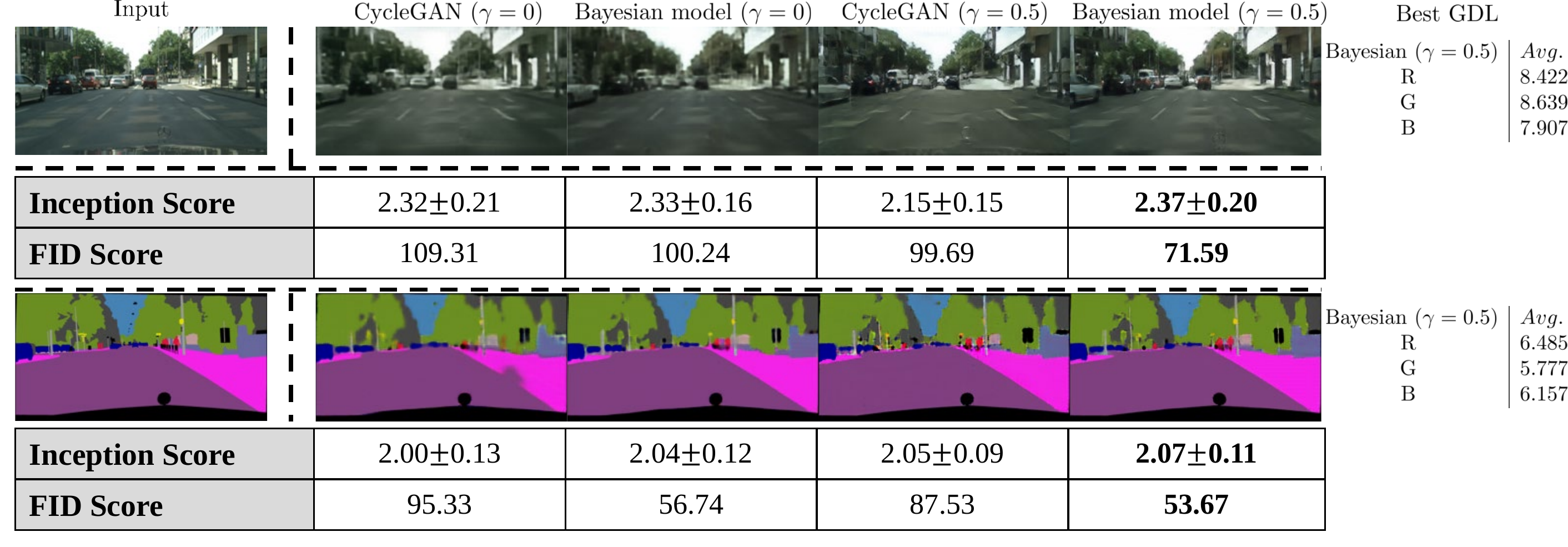}
\end{center}
\vspace{-1em}
\caption{
We use different methods under different conditions for $label \leftrightarrow photo$ mapping. Then we infer reconstructed images after training 50000 iterations and calculate average GDL, IS and FID Score between reconstructed images and source images.
\vspace{-2em}
}
\label{cityscapes_rec}
\end{figure*}

\noindent\textbf{Monet $\leftrightarrow$ Photo Translation} is a kind of image style transfer. Instead of focusing on the neural transfer \cite{Gatys2016Image} that mimic the style of only one painting, we employ automatic mapping and compare different methods in Table \ref{tab:monet2photo}. With the similar analysis as above, Bayesian CycleGAN shows superior performance. Its diversified generating effects are illustrated in Section \ref{diversified}.

\begin{table}[!htbp]
\begin{center}
\resizebox{\linewidth}{6.5mm} {
\begin{tabular}{|l|c|c|c|c|}
\hline
\textbf{Methods} & \textbf{FID A} & \textbf{FID B} & \textbf{Inception A} & \textbf{Inception B} \\ \hline\hline
CycleGAN & 155.32 & 140.00  & 3.76$\pm$0.73 & 3.17$\pm$0.35 \\
Bayesian CycleGAN & \textbf{151.24} & \textbf{137.74} & \textbf{3.86$\pm$0.64} & \textbf{3.35$\pm$0.34} \\
\hline
\end{tabular} }
\end{center}
\caption{Evaluation for Monet $\leftrightarrow$ Photo mapping under condition $\gamma=0$. FID A evaluates statistic distance between generated images in Monet style and artworks; FID B is between generated photos and landscapes.}
\label{tab:monet2photo}
\end{table}
\vspace{-0.3in}

\subsection{Boost Realistic Color Distribution}
\begin{figure}[h]
    \centering
    \includegraphics[width=\linewidth]{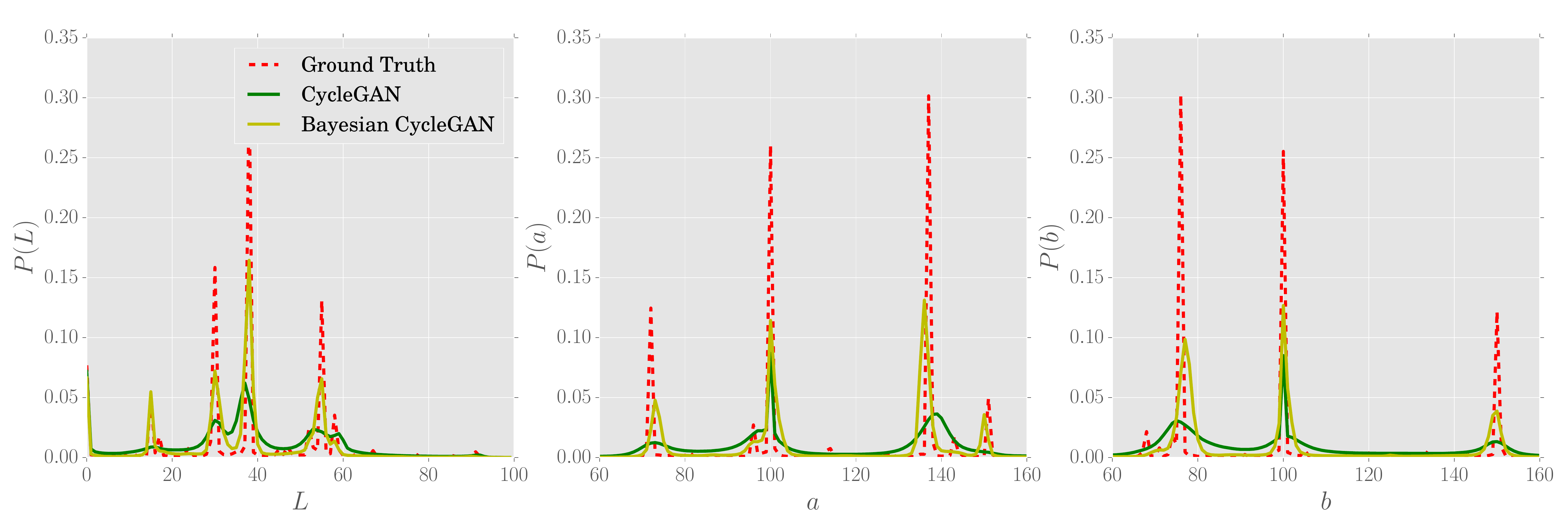}
    \vspace{-1em}
    \caption{
    Color distribution matching property of different methods, tested on Cityscapes.}
    \vspace{-2em}
    \label{semantic_color_dist}
\end{figure}
Another striking effect of Bayesian CycleGAN is of generating realistic landscapes with extensive color distribution.
As demonstrated in \cite{pix2pix2017}, L1 loss stimulates an average, grayish color when it comes to uncertainty space, while adversarial loss encourages sharp and realistic color distribution.
The sampling and regularized priors in our method enhance the sharpening effect of adversarial loss in gradient descend process, which in turn make generated image more colorful.
In addition, the introduced adversarial loss in reconstructed ends also incentivize matching the true color distribution.
In Fig. \ref{fig:histogram}, we investigate if our Bayesian CycleGAN actually achieve this effect on the Cityscapes dataset. The plots show the marginal distributions over generated color values in Lab color space.
We find that: 1) Bayesian CycleGAN generates almost unanimous La distribution with ground truth, and encourages broader b distribution than original methods; 2) The introduced balance factor leads to a wider distribution, confirming the hypothesis that the Bayesian formulation and introduced balance factor both contribute to generating closer color distribution.

We have shown Bayesian CycleGAN generates realistic landscapes with extensive color distribution. Now we present the comparison of semantic color distribution in Fig. \ref{semantic_color_dist}.
Since the sampling and regularized priors in Bayesian CycleGAN enhance the sharpening effect of adversarial loss in gradient descend process, which in turn make generated semantic distribution closer to the true data distribution (See Fig. \ref{fig:cityscapes}),
Bayesian CycleGAN can then boost sharpening and realistic marginal distribution over generated semantic color values in Lab color space, shown in Fig. \ref{semantic_color_dist}.
\vspace{-0.1in}

\vspace{-0.1em}
\subsection{Enhance Reconstructed Learning}
We have shown the effect of balance factor $\gamma$ for adjusting the stability of cyclic framework. Since the GAN loss is added between real images and reconstructed images, which are more realistic than generated ones, the return gradients for generators are relatively smaller than that for discriminators so that the min-max adversarial game goes out of balance.
That explains why original cyclic model suffers from mode collapse as illustrated in Fig. \ref{fig:cityscapes}, and it also demonstrates Bayesian CycleGAN is more stable.
Now, we present the other function of $\gamma$: enhancing reconstructed learning.

In theory, by introducing the balance factor, the optimal status of cyclic model changes from $p_y(y) = p_{\tilde{y}}(y)$ to $(1+\gamma)p_y(y) = p_{\tilde{y}}(y) + \gamma p_{\hat{y}}(y)$, which encourages the distribution of reconstructed images closer to real data distribution. In practice, we compare the FID scores of reconstructed distribution under different settings, illustrated in Fig. \ref{fid_rec}. The FID scores for cyclic model conditioned on $\gamma=0.5$ are lower compared with that conditioned on $\gamma=0$, which means the balance factor enhances reconstructed learning.

In addition, we exhibit reconstructed images from the perspective of both qualitative and quantitative analysis and evaluate images with GDL, Gradient Difference Loss \cite{fabbri2018enhancing}, IS and FID Score, shown in Fig. \ref{cityscapes_rec}, in which lower GDL means that the reconstructed images create a sense of verisimilitude and have similar semantic expression with real images.
The used three metrics demonstrate Bayesian CycleGAN conditioned on $\gamma=0.5$ achieves the best performance.
In particular, Bayesian CycleGAN with adversarial loss in the reconstructed ends ($\gamma=0.5$) achieves \textbf{up to} $\textbf{28.6\%}$ improvement as compared to its counterpart without that loss ($\gamma=0$) based on the FID Score measured between reconstructed images and real images, \textbf{without harming the quality of generated samples thanks to our Bayesian framework} (i.e., remains comparable performance with its counterpart in terms of both FID Score and Inception Score measured between generated images and real images as shown in Fig. 3).
\begin{figure}[!htbp]
\begin{center}
\includegraphics[width=\linewidth,height=3.2cm]{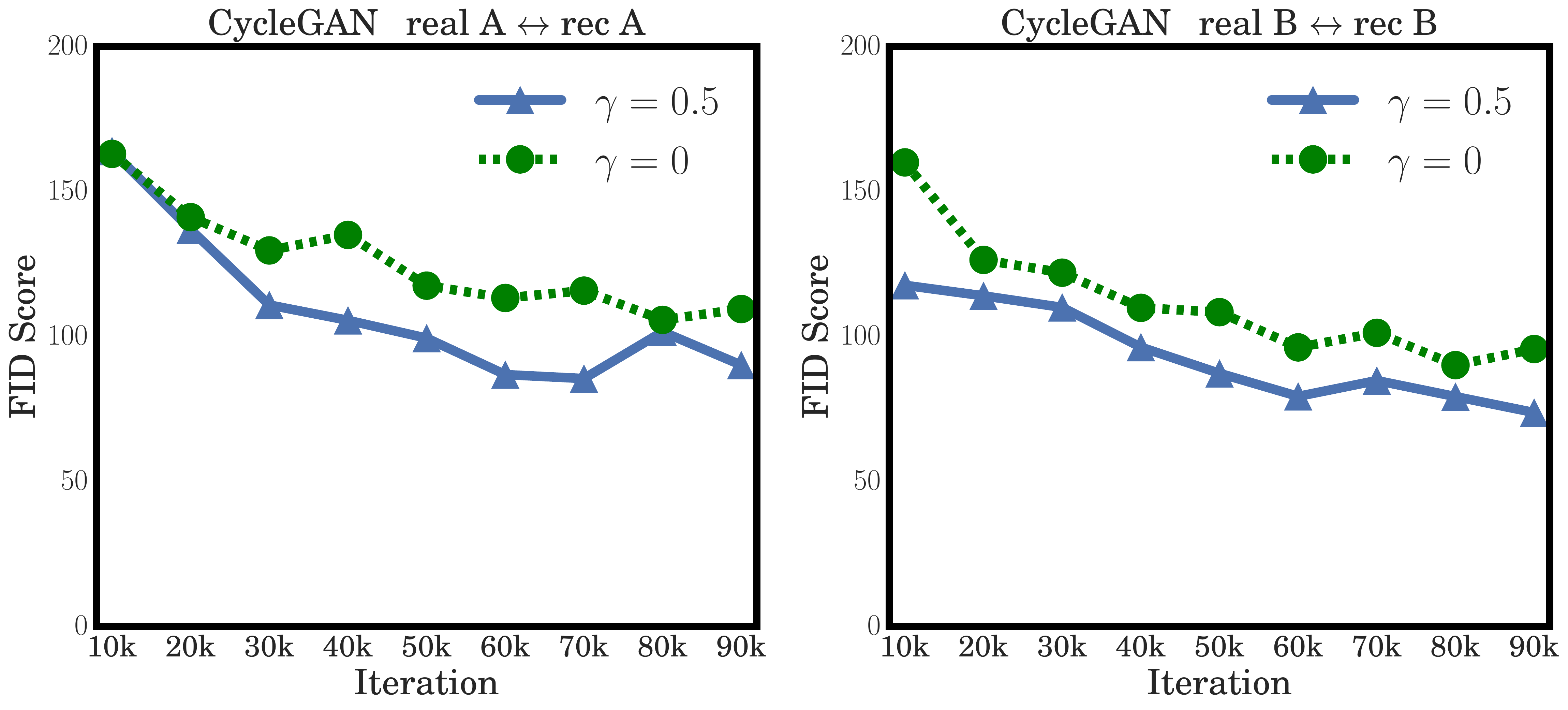}
\includegraphics[width=\linewidth,height=3.2cm]{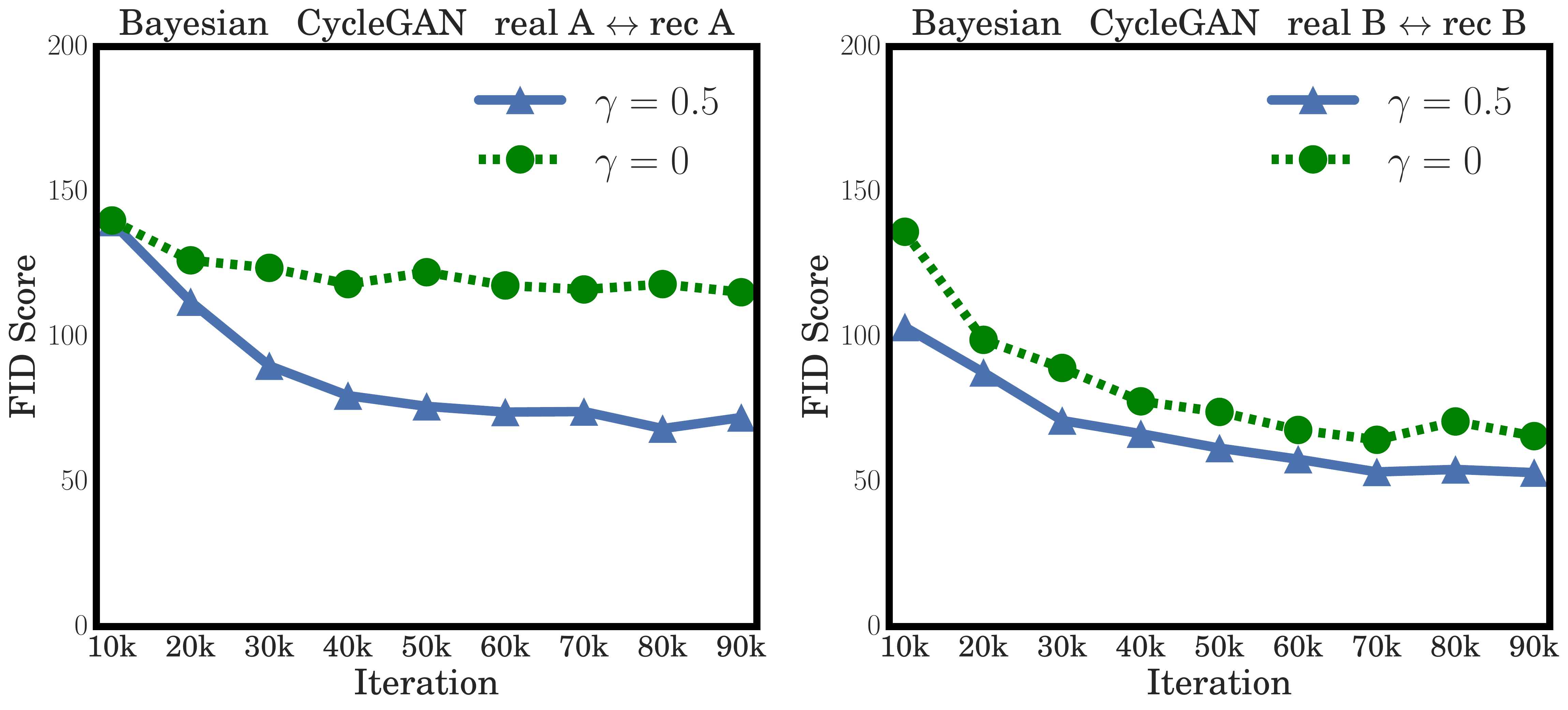}
\end{center}
\vspace{-1em}
\caption{
The FID scores estimate \emph{distance} between the distribution of source images and that of reconstructed images generated by original CycleGAN / Bayesian CycleGAN.
}
\vspace{-1em}
\label{fid_rec}
\end{figure}

\begin{figure*}[t]
\begin{minipage}[t]{0.5\linewidth}
    \begin{flushleft}
    \includegraphics[width=0.98\linewidth]{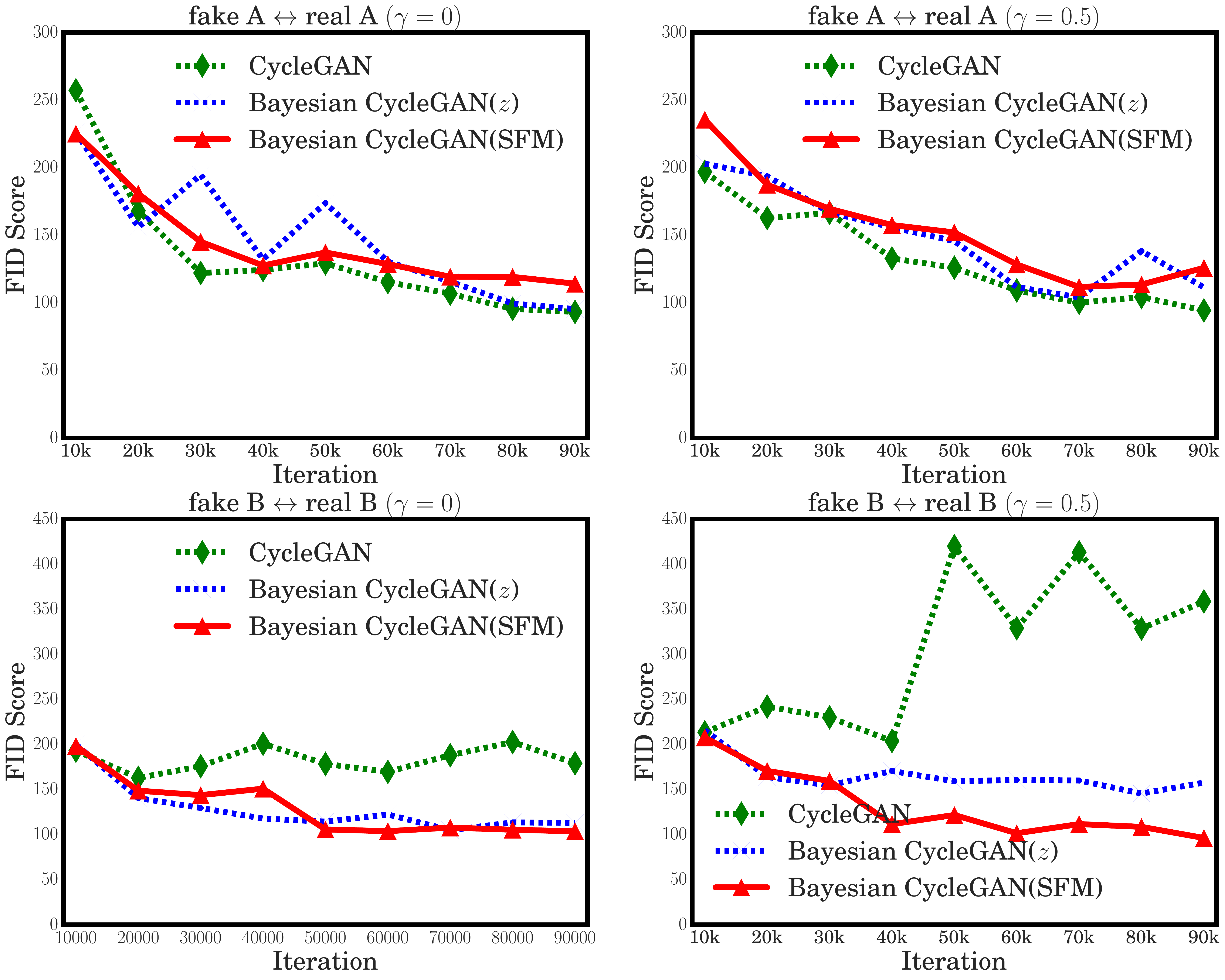}
    \end{flushleft}
\end{minipage}
\begin{minipage}[t]{0.5\linewidth}
    \begin{flushright}
    \includegraphics[width=0.98\linewidth]{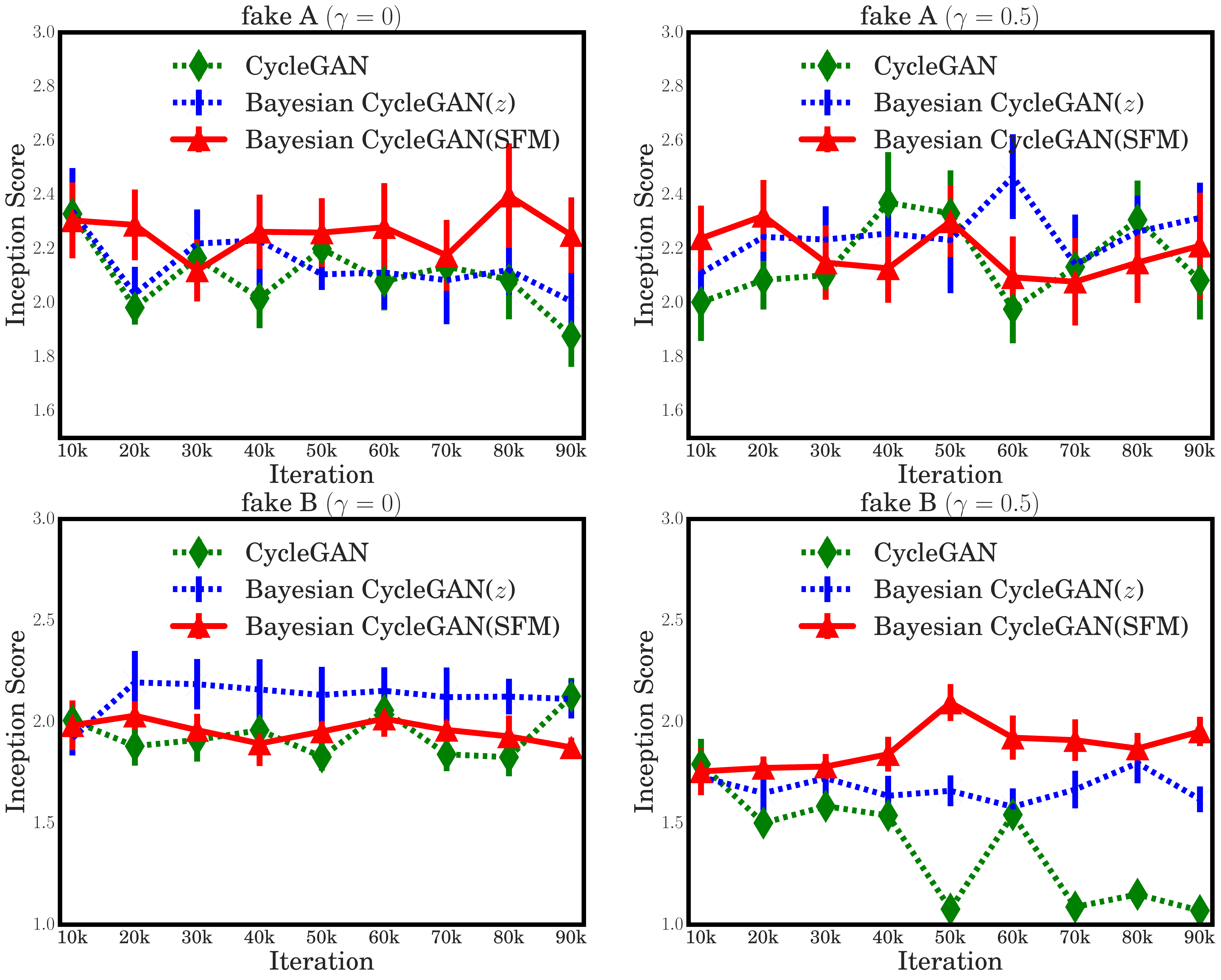}
    \end{flushright}
\end{minipage}
\caption{
Quantitative results for semi-supervised learning (use $1\%$ paired data). Left: The FID Scores estimate the \emph{distance} between the distribution of real images and that of generated ones. Right: The Inception Score for evaluating the quality of generated images. The analysis and conclusion in semi-supervised setting are consistent/same with Section \ref{sec:compare}. }
\label{fig:semi_fid_inception}
\end{figure*}

\vspace{-1em}
\subsection{Diversify Outputs by Replacing Restricted Variable}
\label{diversified}
As mentioned in \cite{almahairi2018augmented}, stochastic CycleGAN cannot generate diversified outputs since the cycle-consistency loss encourages the mapping to ignore latent variables.
Think out of the box, for generating diversified outputs, we can divide latent space into various parts and use only one particular part to fit in training process. Therefore, in the inference process, it is possible for us to diversify generating outputs by replacing that part with another latent distribution.
In practice,
we impose restriction on sampled latent space by changing the latent variable from random noise $z$ to the statistical feature maps generated by VAE-like network illustrated in Section \ref{restricted_latent_sampling}. In inference process, by replacing the statistical feature map with other latent variables, cyclic model can then generate diversified outputs. We carry out several tests on $monet \leftrightarrow photo$ style transfer experiment. The diversified outputs and corresponding latent variables are shown in Fig. \ref{fig:monet2photo}.

\begin{figure}[t]
\begin{center}
    \includegraphics[width=\linewidth]{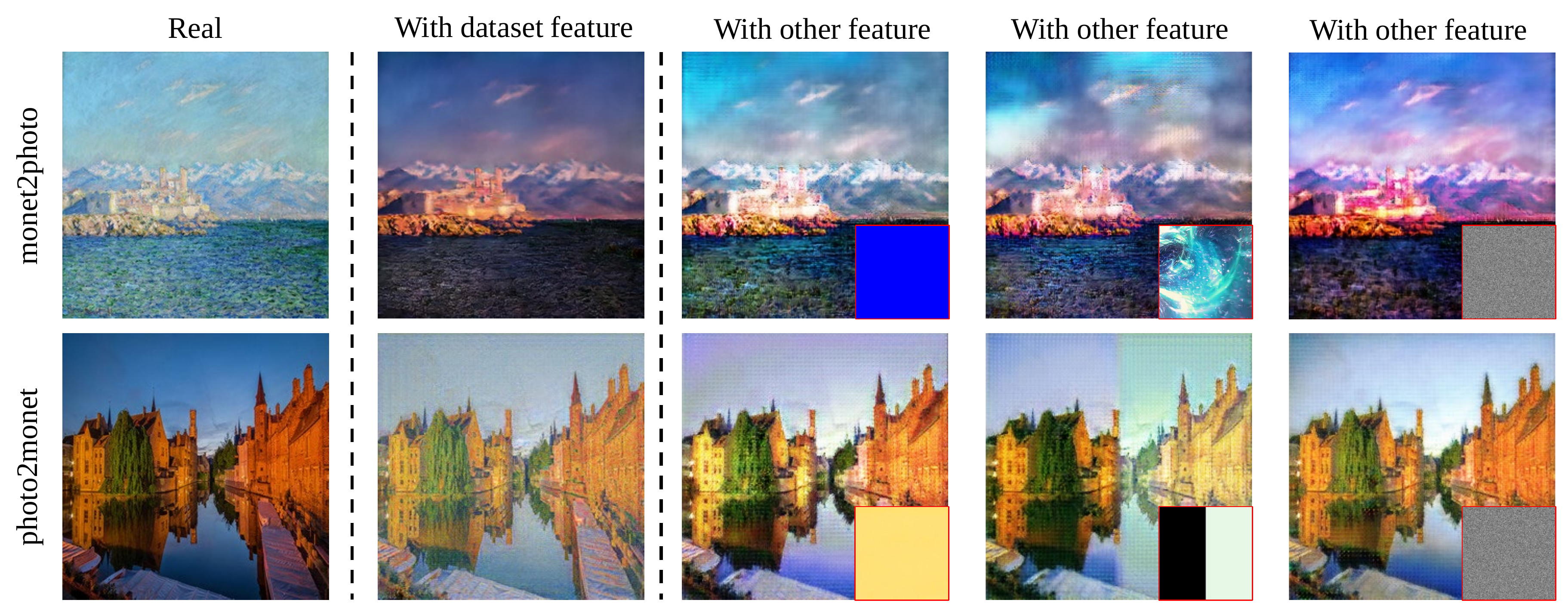}
\end{center}
\caption{
Monet $\leftrightarrow$ Photo Translation: The left column are source images, middle column are generated ones, right columns are diversified outputs generated by replacing SFM with other latent variables, shown on the lower right corner of pictures.}
\vspace{-1em}
\label{fig:monet2photo}
\end{figure}

\subsection{Adaptability in Semi-Supervised Setting}
\label{semi-supervised}
In cases where paired data is accessible, we can leverage the condition to train our model in a semi-supervised setting. By dong this, we could get more accurate mapping in several complicated scenario. In the training process of Cityscapes, mapping errors often occur, for example, the initial model cannot recognize trees, thus, translating trees into something else due to the unsupervised setting. To resolve these ambiguities requires weak semantic supervision, we can use 30 (around $1\%$) paired data (pictures of cityscape and corresponding label images) to initialize our model at the beginning for each epoch. The results (Fig. \ref{fig:semi_fid_inception}) are similar to unsupervised learning, but we find that this slight priori knowledge increases the accuracy of mapping between photos and labels significantly.

\vspace{-1em}
\section{Conclusion}
\label{conclusion}
In this paper,
targeting two underlying assumptions in current cylcic frameworks,
we first propose Bayesian CycleGAN and an integrated cyclic framework for inter-domain mapping, aiming to alleviate the stability issue in cyclic framework.
By exploring the whole posteriors with latent sampling and regularized priors, Bayesian CycleGAN can enhance the generator training and therefore alleviate the risk of mode collapse and instability caused by the unbalance of min-max optimization game.
This is the considerable reason why we achieved more stable training and better domain-to-domain translation results.
Meanwhile, the proposed integrated cyclic framework enable realistic reconstructed learning in theory and practice by replacing the $l_1$ loss with a combination of $l_1$-GAN loss.
In addition, we propose retricted latent sampling method to break the curse of deterministic mapping in CycleGAN.
After imposing restriction on the sampled latent variables, we can achieve diversified mappings by replacing the SFM with other latent variables in inference process.

The proposed Bayesian CycleGAN significantly improves performance and stability.
The next orientation is how to achieve controllable many-to-many mapping under this framework. Also, modifying prior models according to dynamic posterior estimation is worthy of future research.

\bibliographystyle{unsrt}
\bibliography{egbib}

\newpage

\begin{IEEEbiography}
[{\includegraphics[width=1in, height=1.25in]{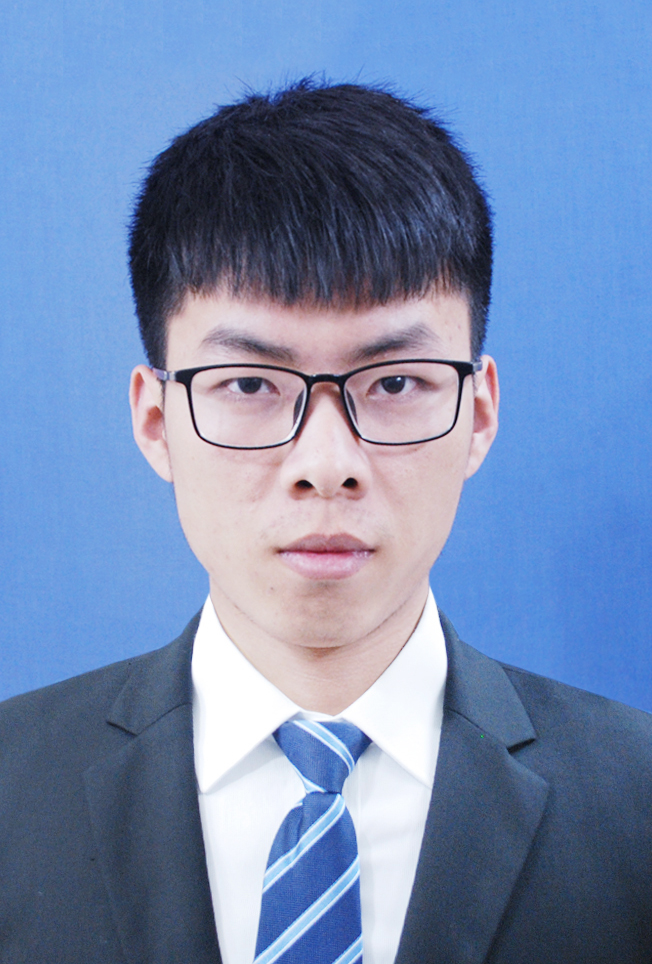}}]{Haoran You}
is currently a Ph.D. student in Electronic and Computer Engineering Department of Rice University. He received his bachelor degree at School of Electronic Information and Communications, Huazhong University of Science and Technology, Wuhan, P.R. China. During undergraduate time, he mainly focus on the area of generative model. Now he is pursuing his doctoral degree in machine learning realm. His research interests include Computer vision, deep learning and resource-constrained machine learning.
\end{IEEEbiography}

\begin{IEEEbiography}
[{\includegraphics[width=1in, height=1.25in]{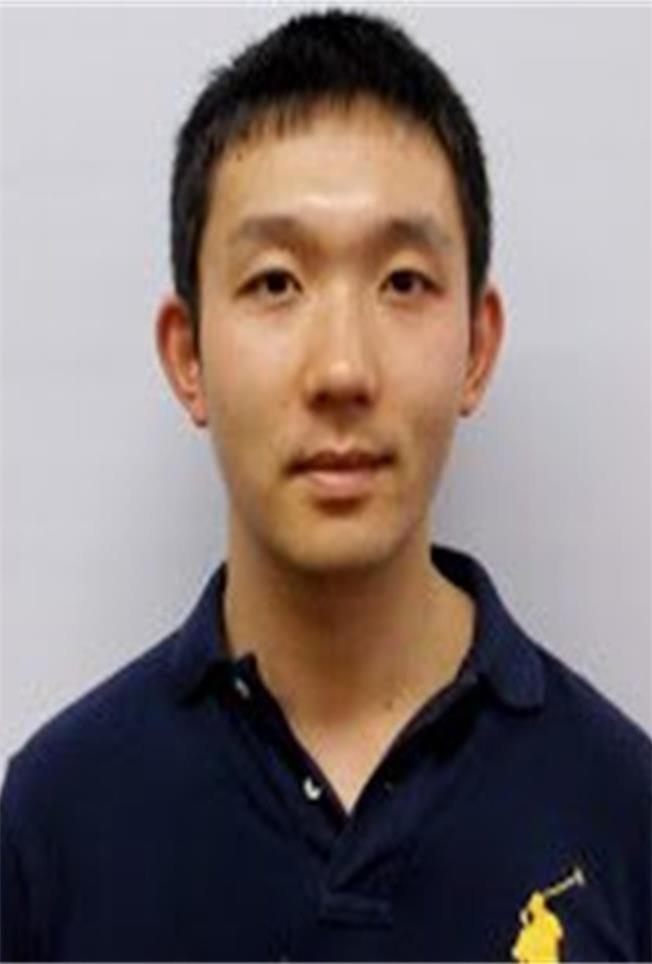}}]{Yu Cheng}
is a Researcher at Microsoft. Before that, he was a Research Staff Member at IBM T.J. Watson Research Center. Yu got his Ph.D. from Northwestern University in 2015 and bachelor from Tsinghua University in 2010. His research is about deep learning in general, with specific interests in the deep generative model, model compression, and adversarial learning. He regularly serves on the program committees of top-tier AI conferences such as NIPS, ICML, ICLR, CVPR and ACL.
\end{IEEEbiography}

\begin{IEEEbiography}
[{\includegraphics[width=1in, height=1.25in]{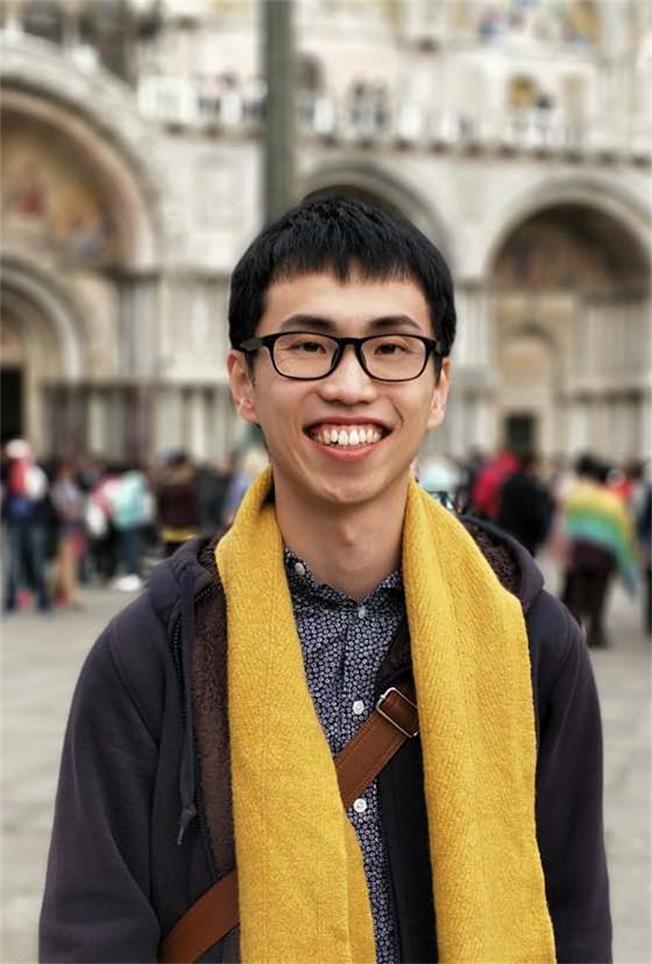}}]{Chunliang Li}
is a fifth-year Ph.D. student in Machine Learning Department at Carnegie Mellon University supervised by Prof. Barnabás Póczos. He is supported by IBM PhD fellowship. Chun-Liang's research is about deep generative models and representation learning. He has worked at Facebook and IBM as research interns. Prior to joining CMU, he received his B.S. and M.S. degree at National Taiwan University under supervision of Prof. Hsuan-Tien Lin.
\end{IEEEbiography}

\begin{IEEEbiography}
[{\includegraphics[width=1in, height=1.25in]{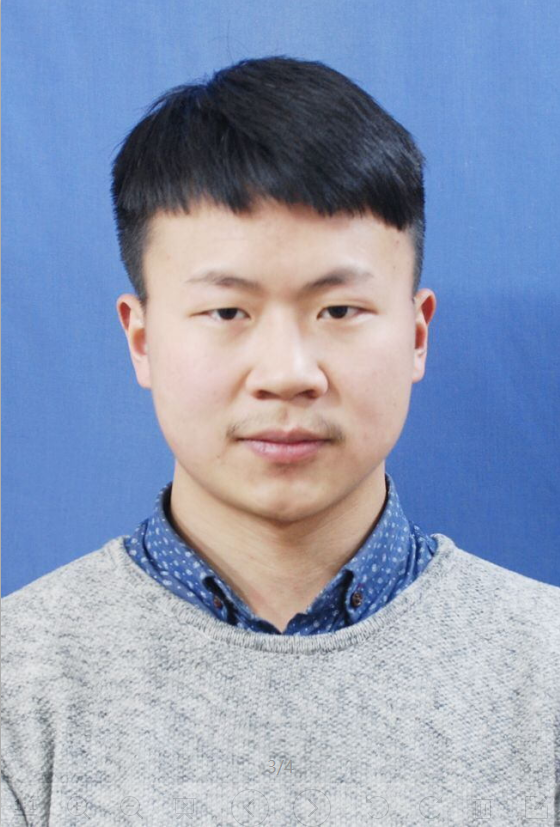}}]{Tianheng Cheng}
is currently a Ph.D. of Electronic Information and Communications, Huazhong University of Science and Technology, Wuhan, P.R. China. He was a research intern at Microsoft Research Asia and worked on object detection and image recognition. His research interests include computer vision, deep learning and reinforcement learning.
\end{IEEEbiography}

\begin{IEEEbiography}
[{\includegraphics[width=1in, height=1.25in]{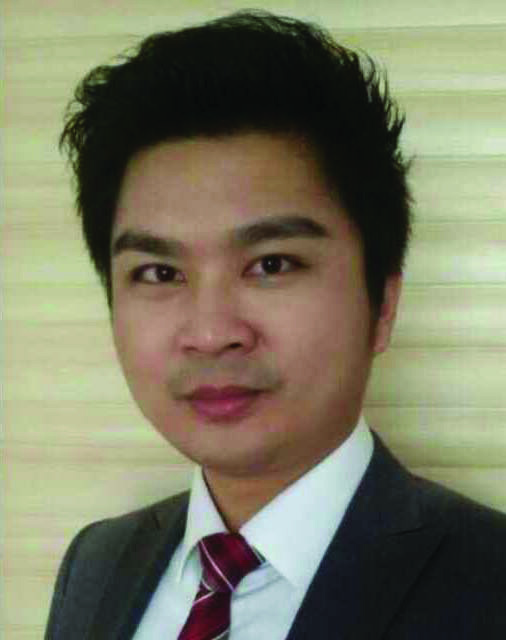}}]{Pan Zhou}
is currently an associate professor with School of Electronic Information and Communica-tions, Huazhong University of Science and Technology, Wuhan, P.R. China. He received his Ph.D. in the School of Electrical and Computer Engineering at the Georgia Institute of Technology (Georgia Tech) in 2011, Atlanta, USA. He received his B.S. degree in the Advanced Class of HUST, and a M.S. degree in the Department of Electronics and Information Engineering from HUST, Wuhan, China, in 2006 and 2008, respectively. He held honorary degree in his bachelor and merit research award of HUST in his master study. He was a senior technical member at Oracle Inc, America during 2011 to 2013, Boston, MA, USA, and worked on hadoop and distributed storage system for big data analytics at Oralce Cloud Platform. His current research interest includes: communication and information networks, security and privacy, machine learning and big data.
\end{IEEEbiography}

\newpage
\section{Appendix: Theoretical Analysis}
\label{theoretical_analysis}
We have introduced balance factor $\gamma$ into integrated cyclic framework. As mentioned above, the balance factor $\gamma$ accelerates the learning process of discriminator and thus controls model stability. Now we need to analyze how does this variation shift global optimum value of adversarial objective. For Bayesian cycleGAN based on whether standard GANs objective or Least Square GANs objective, we will first deduce the optimal status of discriminator. Furthermore, the global minimum of the training criterion and its equilibrium condition will be discussed in the round.

\subsection{Global Optimality of Standard-GANs-Objective Based Integrated Framework}
\label{sec:optim_ganloss}
\begin{proposition}
For any given generator $G_{A}$ and $G_{B}$,  the optimal discriminator $D_{A}$ ($D_{B}$ can be deduced analogously) comes in the following form:
\begin{align}
  D_{A}^{*}(y) &= \frac{(1 + \gamma)p_{y}(y)}{(1 + \gamma)p_{y}(y) + p_{\tilde{y}}(y) + \gamma p_{\hat{y}}(y)}.
\end{align}
\end{proposition}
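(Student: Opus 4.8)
The plan is to mimic the classical argument of Goodfellow et al.\ for the optimal discriminator, but now carrying the extra reconstructed-sample term with its balance weight $\gamma$. First I would fix the generators $G_A,G_B$ and isolate the part of the integrated objective that depends on $D_A$. From the pair-count ratio $1+\gamma:1:\gamma$ among $\{(y,t=1)\},\{(\tilde y,t=0)\},\{(\hat y,t=0)\}$, and discarding the common latent-sampling multiplicity $m_x$ as an irrelevant positive constant, the log-posterior of Eq.~(\ref{da}) collapses to the value function
\begin{align*}
V(D_A) &= (1+\gamma)\,\mathbb{E}_{y\sim p_y}[\log D_A(y)] + \mathbb{E}_{x\sim p_x}[\log(1-D_A(G_A(x)))] \\
&\quad + \gamma\,\mathbb{E}_{y\sim p_y}[\log(1-D_A(G_A(G_B(y))))].
\end{align*}
Passing to the push-forward densities $p_{\tilde y}$ of $\tilde y=G_A(x)$ and $p_{\hat y}$ of $\hat y=G_A(G_B(y))$, each expectation becomes an integral over $Y$, so
\begin{align*}
V(D_A) = \int_Y \Big[(1+\gamma)p_y(y)\log D_A(y) + \big(p_{\tilde y}(y)+\gamma p_{\hat y}(y)\big)\log(1-D_A(y))\Big]\,dy.
\end{align*}

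Next I would note that $D_A$ is an arbitrary function and can be chosen independently at each $y$, so maximizing $V$ reduces to maximizing the integrand pointwise. For fixed $y$ the integrand has the shape $a\log d+b\log(1-d)$ with $a=(1+\gamma)p_y(y)\ge 0$ and $b=p_{\tilde y}(y)+\gamma p_{\hat y}(y)\ge 0$; by the elementary lemma used in the original GAN analysis, this function on $[0,1]$ attains its unique maximum at $d=a/(a+b)$ whenever $(a,b)\ne(0,0)$, verified by setting the derivative $a/d-b/(1-d)$ to zero and invoking concavity. Substituting $a$ and $b$ yields exactly
\begin{align*}
D_A^*(y)=\frac{(1+\gamma)p_y(y)}{(1+\gamma)p_y(y)+p_{\tilde y}(y)+\gamma p_{\hat y}(y)},
\end{align*}
with the value of $D_A$ outside $\mathrm{supp}(p_y)\cup\mathrm{supp}(p_{\tilde y})\cup\mathrm{supp}(p_{\hat y})$ immaterial. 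Setting $\gamma=0$ recovers the familiar $D_A^*=p_y/(p_y+p_{\tilde y})$, which serves as a sanity check, and $D_B^*$ follows by the symmetric argument with $x,\tilde x,\hat x$ and sampling volume $m_y$ in place of their counterparts.

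The calculus itself (derivative, concavity, the pointwise-to-global reduction) is entirely routine; the one place where care is needed is the bookkeeping of the weights, i.e.\ justifying that the adversarial factors of $D_A$'s posterior in Eq.~(\ref{da}) reduce, after dropping the constant sampling multiplicities, precisely to the three-term value function above with coefficients $1+\gamma$, $1$, and $\gamma$. I would therefore devote most of the write-up to making that reduction explicit and to stating the scalar optimization lemma cleanly, leaving the substitution as a one-line conclusion.
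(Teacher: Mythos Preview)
Your proposal is correct and follows essentially the same route as the paper: reduce the $D_A$-dependent part of the log-posterior to the three-term value function with weights $1+\gamma$, $1$, $\gamma$, rewrite the expectations as a single integral over $Y$, and maximize pointwise using the elementary lemma $a\log d + b\log(1-d)\le a\log\frac{a}{a+b}+b\log\frac{b}{a+b}$. The paper's proof is slightly terser about the bookkeeping step (it simply asserts that the Monte Carlo sampling multiplicities can be dropped and then writes the same integral you derive), so your plan to spell that reduction out explicitly is a reasonable expository choice.
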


\begin{proof}
  We use posterior item $p(\theta_{da} | t, \theta_{ga}, \theta_{gb})$, defined as Equ. (\ref{da}), to optimize $D_{A}$, aiming to maximum a posterior estimation. For convenience, we use the alternative posterior form which has the same meaning with original formula:
  \begin{align*}
  p(\theta_{da} | t, \theta_{ga}, \theta_{gb}) & \approx \sum_{k=1}^{m_y} \sum_{l=1}^{m_x} \left( \sum_{i=1}^{n_x} \sum_{j=1}^{n_y} p(\theta_{da} | \Omega_{da}, \theta_{ga}, \theta_{gb}) \right)
  \end{align*}
  \begin{align*}
   & \sum_{i=1}^{n_x} \sum_{j=1}^{n_y} p(\theta_{da} | \Omega_{da}, \theta_{ga}, \theta_{gb}) \\
   & \propto \prod_{j=1}^{n_y} (D_{A}(y^{(j)}; \theta_{da}))^{(1+\gamma)} \times \prod_{j=1}^{n_y} (1-D_{A}(\hat{y}_{z}^{(j)}; \theta_{da}))^{\gamma} \\
     & \times \prod_{i=1}^{n_x} (1-D_{A}(G_{A}(x_{z}^{(i)}; \theta_{ga}); \theta_{da})) \times p(\theta_{da} | \alpha_{da}),
  \end{align*}
  Due to the Monte Carlo sampling method \cite{hastings1970monte} has absolutely no effect on our proof \cite{smith2017don}, The optimization of $p(\theta_{da} | t, \theta_{ga}, \theta_{gb})$ equals to the optimization of $\sum_{i=1}^{n_x} \sum_{j=1}^{n_y} p(\theta_{da} | \Omega_{da}, \theta_{ga}, \theta_{gb})$, maximizing the quantity $V(G_{A},G_{B},D_{A})$:
  \begin{align*}
    \max_{D_{A}} & \, \log\prod_{i=1}^{n_{y}} ( D_{A}(y^{(i)}; \theta_{da}))^{1+\gamma} \times \prod_{i=1}^{n_{y}}(1-D_{A}(\hat{y}^{(i)}; \theta_{da}))^{\gamma} \\
    & \times \prod_{i=1}^{n_{x}}(1-D_{A}(G_{A}(x^{(i)}; \theta_{ga}); \theta_{da})),
  \end{align*}
 which can be expressed as:
  \begin{align*}
    & (1+\gamma) \cdot \mathbb{E}_{y \sim p_{y}(y)} [\log(D_{A}(y))] + \mathbb{E}_{y \sim p_{\tilde{y}}(y)} [\log(1-D_{A}(y))] 
    \\
    & + \gamma \cdot \mathbb{E}_{y \sim p_{\hat{y}}(y)} [\log(1-D_{A}(y))] \nonumber \\
    & = \int_{y} p_{y}(y)(1+\gamma)\log(D_{A}(y)) + p_{\tilde{y}}(y) \log(1-D_{A}(y)) 
    \\
    & + p_{\hat{y}}(y) \gamma \log(1-D_{A}(y)) \, dy. 
  \end{align*}
  The function $a(1+\gamma)\log(y)+b\log(1-y)+c\gamma \log(1-y)$ achieves its maximum in $[0,1]$ at $\frac{(1+\gamma)a}{(1+\gamma)a + b + \gamma c}$, so the integral function achieves the maximum at $D_{A}^{*}$.
\end{proof}

Given optimal status of discriminators, we want to maximize the posterior of generators, which is equivalent to minimize the posterior of two discriminators and minimize the L1 term $\left\| \hat{y} - y \right\|$, $\left\| \hat{x} - x \right\|$ simultaneously. The following theorem shows optimal minimum of objective and also demonstrates the introduced variation makes no substantial difference of equilibrium conditions.
\begin{thm}
The global minimum of the training criterion $C(G)+L1$ can be achieved if and only if $p_{y}(y) = p_{\tilde{y}}(y) = p_{\hat{y}}(y)$ and $p_{x}(x) = p_{\tilde{x}}(x) = p_{\hat{x}}(x)$.
\end{thm}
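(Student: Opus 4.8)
The plan is to follow the classical GAN optimality argument of Goodfellow et al., adapted to the integrated cyclic setting with the balance factor $\gamma$. First I would substitute the optimal discriminators $D_A^*$ and $D_B^*$ from the preceding Proposition into the generator objective. Recall that maximizing the generator posterior $p(\theta_{ga},\theta_{gb}\mid\theta_{da},\theta_{db})$ (equivalently, minimizing the discriminator posteriors together with the $\ell_1$ reconstruction terms) amounts, after taking $\log$ and passing from sums to integrals via the same Monte Carlo remark used in the Proposition, to minimizing a quantity $C(G) + L1$. For the $D_A$-side, plugging in $D_A^*(y)$ gives terms of the form $(1+\gamma)\,\mathbb{E}_{y\sim p_y}\log D_A^*(y) + \mathbb{E}_{y\sim p_{\tilde y}}\log(1-D_A^*(y)) + \gamma\,\mathbb{E}_{y\sim p_{\hat y}}\log(1-D_A^*(y))$, and symmetrically for the $D_B$-side in $x$.

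Next I would reorganize each side into a divergence. Writing $q(y) := \frac{(1+\gamma)p_y(y) + p_{\tilde y}(y) + \gamma p_{\hat y}(y)}{2(1+\gamma)}$, one checks that $q$ is a probability density (it integrates to $1$ because the numerator integrates to $2(1+\gamma)$), and the $D_A$-contribution can be written, up to an additive constant $-2(1+\gamma)\log 2$, as a sum of KL-type terms:
\begin{align*}
(1+\gamma)\,\mathrm{KL}\!\left(p_y \,\big\|\, q\right) + \mathrm{KL}\!\left(p_{\tilde y} \,\big\|\, q\right) + \gamma\,\mathrm{KL}\!\left(p_{\hat y} \,\big\|\, q\right).
\end{align*}
Each KL term is nonnegative and vanishes exactly when its first argument equals $q$; since the three densities $p_y, p_{\tilde y}, p_{\hat y}$ are forced to all equal the \emph{same} $q$, this is possible iff $p_y = p_{\tilde y} = p_{\hat y}$ (and then indeed all equal $q$). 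An identical computation on the $D_B$-side forces $p_x = p_{\tilde x} = p_{\hat x}$. Finally, the $\ell_1$ terms $\lambda(\|\hat x - x\| + \|\hat y - y\|)$ are nonnegative and attain their minimum $0$ under these matching conditions (the cycle-consistency is satisfiable at the optimum, consistent with the existence of an exact inverse pair of generators), so the two parts are minimized simultaneously and the stated characterization follows. I would then note that this reduces to the original CycleGAN equilibrium $p_y = p_{\tilde y}$, $p_x = p_{\tilde x}$ when $\gamma = 0$, confirming the claim that the variation does not substantially change equilibrium conditions.

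\textbf{Main obstacle.} The routine part is the algebraic regrouping into KL divergences; the step that needs the most care is arguing that the KL-sum and the $\ell_1$ terms are \emph{jointly} minimizable — i.e., that there actually exist generators $G_A, G_B$ simultaneously realizing $p_{\tilde y} = p_{\hat y} = p_y$, $p_{\tilde x} = p_{\hat x} = p_x$ \emph{and} exact cycle-consistency, so that the ``if'' direction is non-vacuous. This is where \textit{Assumption 1} and \textit{Assumption 2} (sufficient capacity and a deterministic invertible mapping) are implicitly invoked; I would state this as the operating assumption rather than attempt a constructive existence proof, mirroring how the original GAN/CycleGAN analyses treat the density-space optimum.
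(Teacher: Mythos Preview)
Your three–way KL decomposition is algebraically incorrect, and this is a genuine gap, not a detail. After substituting $D_A^*$, the ``fake'' part of $C(G)$ is
\[
\int \bigl(p_{\tilde y}(y)+\gamma p_{\hat y}(y)\bigr)\,\log\!\frac{p_{\tilde y}(y)+\gamma p_{\hat y}(y)}{(1+\gamma)p_y(y)+p_{\tilde y}(y)+\gamma p_{\hat y}(y)}\,dy,
\]
because $1-D_A^*(y)$ has the \emph{sum} $p_{\tilde y}+\gamma p_{\hat y}$ in the numerator. You cannot rewrite this as $KL(p_{\tilde y}\|q)+\gamma\,KL(p_{\hat y}\|q)$ up to an additive constant: the discrepancy is $\int(p_{\tilde y}+\gamma p_{\hat y})\log(p_{\tilde y}+\gamma p_{\hat y}) - \int p_{\tilde y}\log p_{\tilde y} - \gamma\int p_{\hat y}\log p_{\hat y}$, which depends on $p_{\tilde y},p_{\hat y}$ (it is essentially the mutual information between the mixture label and $y$) and is zero only when $p_{\tilde y}=p_{\hat y}$. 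So $C(G)$ is \emph{not} a weighted three-component generalized JSD.

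The consequence is that $C(G)$ alone does \emph{not} pin down $p_y=p_{\tilde y}=p_{\hat y}$. The correct regrouping (the paper's route) yields a two-way Jensen--Shannon divergence between the measures $(1+\gamma)p_y$ and $p_{\tilde y}+\gamma p_{\hat y}$, whose vanishing gives only the single constraint $(1+\gamma)p_y=p_{\tilde y}+\gamma p_{\hat y}$; there is a whole family of $(p_{\tilde y},p_{\hat y})$ satisfying this with $p_{\tilde y}\neq p_{\hat y}$. It is precisely the $\ell_1$ term that breaks this degeneracy: $L1=0$ forces $\hat y=y$ (hence $p_{\hat y}=p_y$), and substituting back gives $p_{\tilde y}=p_y$. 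Thus the $\ell_1$ term is not merely something to be ``jointly minimizable'' for the \emph{if} direction as you frame it in your Main obstacle; it is structurally required for the \emph{only if} direction. Your plan would need to replace the claimed decomposition by the two-way JSD and then invoke $L1$ to finish.
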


\begin{proof}
  \begin{align*}
    & C(G) = \max_{D_{A}} V(G_{A},G_{B},D_{A}) + \max_{D_{B}} V(G_{A},G_{B},D_{B}) \nonumber \\
    & = (1+\gamma) \cdot \mathbb{E}_{y \sim p_{y}(y)} [\log(D_{A}^{*}(y))] \\
    & + \mathbb{E}_{y \sim p_{\tilde{y}}(\tilde{y})} [\log(1-D_{A}^{*}(y))] + \gamma \cdot \mathbb{E}_{y \sim p_{\hat{y}}(\hat{y})} [\log(1-D_{A}^{*}(y))] \nonumber \\
    & + (1+\gamma) \cdot \mathbb{E}_{x \sim p_{x}(x)} [\log(D_{B}^{*}(x))] \\
    & + \mathbb{E}_{x \sim p_{\tilde{x}}(\tilde{x})} [\log(1-D_{B}^{*}(x))] + \gamma \cdot \mathbb{E}_{x \sim p_{\hat{x}}(\hat{x})} [\log(1-D_{B}^{*}(x))] \nonumber \\
    & = (1+\gamma)\sum p_{y} \log(\frac{(1+\gamma)p_{y}}{(1+\gamma)p_{y}+p_{\tilde{y}}+\gamma p_{\hat{y}}}) \\ 
    & + \sum (p_{\tilde{y}} + \gamma p_{\hat{y}}) \log(\frac{p_{\tilde{y}}+\gamma p_{\hat{y}}}{(1+\gamma)p_{y}+p_{\tilde{y}}+\gamma p_{\hat{y}}}) \nonumber \\
    & \, + (1+\gamma)\sum p_{x} \log(\frac{(1+\gamma)p_{x}}{(1+\gamma)p_{x}+p_{\tilde{x}}+\gamma p_{\hat{x}}}) \\
    & + \sum (p_{\tilde{x}} + \gamma p_{\hat{x}}) \log(\frac{p_{\tilde{x}}+\gamma p_{\hat{x}}}{(1+\gamma)p_{x}+p_{\tilde{x}}+\gamma p_{\hat{x}}}). \nonumber
  \end{align*}
  When the given condition is tenable, L1 term is zero.
  We rewrite the $C(G)$ equation with Kullback-Leibler divergence form and deduce that the minimum value of $C(G)$ is $-4 (1 + \gamma) \log 2$.
  \begin{align*}
    C(G) & = - 2(1+\gamma) \log 2 - 2(1+\gamma) \log 2 \nonumber \\
    & + KL\left((1+\gamma)p_{y} \left \| \frac{(1+\gamma)p_{y} + p_{\tilde{y}} + \gamma p_{\hat{y}}}{2} \right. \right) \\
    & + KL\left(p_{\tilde{y}} + \gamma p_{\hat{y}} \left \| \frac{(1+\gamma)p_{y} + p_{\tilde{y}} + \gamma p_{\hat{y}}}{2} \right. \right) \nonumber \\
    & + KL\left((1+\gamma)p_{x} \left \| \frac{(1+\gamma)p_{x} + p_{\tilde{x}} + \gamma p_{\hat{x}}}{2} \right. \right) \\
    & + KL\left(p_{\tilde{x}} + \gamma p_{\hat{x}} \left \| \frac{(1+\gamma)p_{x} + p_{\tilde{x}} + \gamma p_{\hat{x}}}{2} \right. \right) \nonumber \\
    & = -4 (1+\gamma) \log 2 + 2 \cdot JSD \left((1+\gamma)p_{y} \left \| (p_{\tilde{y}} + \gamma p_{\hat{y}}) \right.\right)\\
    & + 2 \cdot JSD \left((1+\gamma)p_{x} \left \| (p_{\tilde{x}} + \gamma p_{\hat{x}}) \right.\right). \nonumber
  \end{align*}
  According to the non-negative property of Jensen-Shannon divergence, the $JSD$ can be zero only when that two distributions are equal, that is $(1+\gamma)p_{y}(y)=p_{\tilde{y}}(y) + \gamma p_{\hat{y}}(y)$ and $(1+\gamma)p_{x}(x)=p_{\tilde{x}}(x) + \gamma p_{\hat{x}}(x)$. Then take the L1 term into consideration, if we want that achieve its minimum, there must be $p_{\hat{y}} = p_{y}$ and $p_{\hat{x}} = p_{x}$, so the global minimum of training criterion $C(G)+L1$ can be achieved if and only if $p_{y}(y) = p_{\tilde{y}}(y) = p_{\hat{y}}(y)$ and $p_{x}(x) = p_{\tilde{x}}(x) = p_{\hat{x}}(x)$.
\end{proof}

\subsection{Global Optimality of Least-Square-GANs-Objective Based Integrated Framework}
\label{sec:optim_lsganloss}
\begin{proposition}
For any given generator $G_{A}$ and $G_{B}$, the optimal discriminator $D_{A}$ ($D_{B}$ can be deduced analogously) comes in the following form:
\begin{align}
  D_{A}^{*}(y) &= \frac{(1 + \gamma)p_{y}(y)}{(1 + \gamma)p_{y}(y) + p_{\tilde{y}}(y) + p_{\hat{y}}(y)}.
\end{align}
\end{proposition}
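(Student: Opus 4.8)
The plan is to replay the argument that established Proposition~1, but starting from the least-square posterior Equ.~(\ref{da-ls}) in place of the log-likelihood posterior Equ.~(\ref{da}). Since MAP estimation maximizes $\log p(\theta_{da}\mid t,\theta_{ga},\theta_{gb})$, and since — by the same justification invoked in the proof of Proposition~1 — the Monte Carlo sampling volumes $m_x,m_y$ and the Gaussian prior $p(\theta_{da}\mid\alpha_{da})$ do not move the location of the optimum when $D_A$ is regarded as an unconstrained function, it suffices to take $-\log$ of the three product factors in Equ.~(\ref{da-ls}) and rewrite the finite sums over samples as expectations under $p_y$, $p_{\tilde y}$ and $p_{\hat y}$. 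Reading off the assignment ratio $1+\gamma:1:\gamma$ for $(y,t):(\tilde y,t):(\hat y,t)$, this produces the quantity to be \emph{minimized},
\begin{align*}
V(G_A,G_B,D_A) ={}& (1+\gamma)\,\mathbb{E}_{y\sim p_y}\!\big[(D_A(y)-1)^2\big]\\
&{}+\mathbb{E}_{y\sim p_{\tilde y}}\!\big[D_A(y)^2\big]+\gamma\,\mathbb{E}_{y\sim p_{\hat y}}\!\big[D_A(y)^2\big].
\end{align*}

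Next I would collect the three expectations under a single integral,
\[
V(G_A,G_B,D_A)=\int_y\Big[(1+\gamma)p_y(y)\big(D_A(y)-1\big)^2+\big(p_{\tilde y}(y)+\gamma p_{\hat y}(y)\big)D_A(y)^2\Big]\,dy,
\]
and minimize the integrand pointwise in $y$. Fixing $y$ and writing $v=D_A(y)$, $A=(1+\gamma)p_y(y)\ge 0$ and $B$ for the $\gamma$-weighted sum of the fake-sample densities at $y$, the scalar function $g(v)=A(v-1)^2+Bv^2$ is strictly convex (its second derivative is $2(A+B)>0$ wherever the densities do not all vanish), so the unique stationary point, obtained from $g'(v)=2A(v-1)+2Bv=0$, namely $v^\star=A/(A+B)\in[0,1]$, is the global minimizer. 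Substituting $A$ and $B$ back identifies $D_A^*(y)$ with the claimed ratio $\tfrac{(1+\gamma)p_y(y)}{(1+\gamma)p_y(y)+p_{\tilde y}(y)+p_{\hat y}(y)}$; the formula for $D_B^*$ follows from the symmetric computation with the roles of the $X$- and $Y$-domains exchanged.

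The pointwise quadratic minimization — the least-square analogue of the ``$a(1+\gamma)\log y+b\log(1-y)+c\gamma\log(1-y)$'' step in Proposition~1 — and the convexity check are routine. The step I expect to demand the most care is the coefficient bookkeeping in the first paragraph: one must verify that the exponents $(1+\gamma)m_x$, $\gamma$ and $1$ attached to the three factors of Equ.~(\ref{da-ls}), together with the batch sizes $n_x,n_y$ and the sampling volumes $m_x,m_y$, normalize to exactly the weights appearing in front of the three expectations in $V$, and — exactly as in Proposition~1 — that the Monte Carlo sampling factors and the prior may legitimately be discarded before passing to the continuous, pointwise optimization. Once $V$ is in the displayed quadratic form, the rest is immediate.
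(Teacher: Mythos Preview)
Your proposal is correct and follows essentially the same route as the paper: both start from Equ.~(\ref{da-ls}), discard the Monte Carlo sampling factors and the prior, rewrite the objective as the weighted sum of three quadratic expectations, pass to a single integral, and minimize pointwise via the elementary scalar problem $a(1+\gamma)(v-1)^2+bv^2+c\gamma v^2$. Your explicit convexity check is a mild elaboration of what the paper states without proof; note also that your computation (and the paper's own proof) actually yields $\gamma p_{\hat y}(y)$ rather than $p_{\hat y}(y)$ in the denominator, so the displayed formula in the proposition appears to carry a typo that your substitution step silently reproduces.
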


\begin{proof}
  We use posterior item $p(\theta_{da} | t, \theta_{ga}, \theta_{gb})$, defined as Equ. (\ref{da-ls}), to optimize $D_{A}$, aiming to maximum a posterior estimation, due to the same reason with standard GAN objective based deduction, the optimization of $p(\theta_{da} | t, \theta_{ga}, \theta_{gb})$ equals to the optimization of \\
  $\sum_{i=1}^{n_x} \sum_{j=1}^{n_y} p(\theta_{da} | \Omega_{da}, \theta_{ga}, \theta_{gb})$, minimizing the quantity $V(G_{A},G_{B},D_{A})$:
  \begin{align*}
    \max_{D_{A}} \,\, & \log\prod_{i=1}^{n_{y}} e^{-(1+\gamma) \cdot (D_{A}(y^{(i)}; \theta_{da}) - 1)^{2} -\gamma \cdot (D_{A}( \hat{y}^{(i)}; \theta{da}))^{2}} \\
    & \times \prod_{i=1}^{n_{x}} e^{-(D_{A}( G_{A}(x^{(i)}; \theta_{ga}); \theta_{da}))^{2}},
  \end{align*}
  which can be expressed as:
  \begin{align*}
    & V(G_{A},G_{B},D_{A}) \\
    & = (1+\gamma) \cdot \mathbb{E}_{y \sim p_{y}(y)} [(D_{A}(y)-1)^{2}] + \mathbb{E}_{y \sim p_{\tilde{y}}(y)} [(D_{A}(y))^{2}] \\
    & + \gamma \cdot \mathbb{E}_{y \sim p_{\hat{y}}(y)} [(D_{A}(y))^{2}] \\
    & = \int_{y} p_{y}(y)(1+\gamma)(D_{A}(y)-1)^{2} + p_{\tilde{y}}(y) (D_{A}(y))^{2} \\
    & + p_{\hat{y}}(y) \gamma (D_{A}(y))^{2} \, dy.
  \end{align*}
  The function $a(1+\gamma)(y - 1)^{2} + by^{2}+c\gamma y^{2}$ achieves its minimum in $[0,1]$ at $\frac{(1+\gamma)a}{(1+\gamma)a + b + \gamma c}$, so the integral function achieves the minimum value at $D_{A}^{*}$.
\end{proof}

Given optimal status of discriminators, we want to maximize the posteriors of generators, which is related to the distribution of generated $\tilde{y}, \hat{y}$ and $\tilde{x}, \hat{x}$. The maximize process can be converted to minimize $C(G)$ and L1 term $\left\| \hat{y} - y \right\|,$ $\left\| \hat{x} - x \right\|$ simultaneously. The following lemma shows optimal minimum of objective and also demonstrates the balance factor has no substantial effect on equilibrium conditions.

\begin{thm}
The global minimum of the training criterion C(G) can be achieved if and only if $p_{y}(y) = p_{\tilde{y}}(y) = p_{\hat{y}}(y)$ and $p_{x}(x) = p_{\tilde{x}}(x) = p_{\hat{x}}(x)$.
\end{thm}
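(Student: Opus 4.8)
The plan is to mirror the proof of Theorem~1, replacing the log-likelihood terms with least-squares terms. First I would substitute the optimal discriminators $D_A^*,D_B^*$ from the preceding Proposition into the generator log-posterior (\ref{ga-gb-ls}); dropping the $G$-independent priors and isolating the adversarial part, the training criterion is
\begin{align*}
C(G) &= \mathbb{E}_{y\sim p_{\tilde{y}}}[(D_A^*(y)-1)^2] + \gamma\,\mathbb{E}_{y\sim p_{\hat{y}}}[(D_A^*(y)-1)^2] \\
&\quad + \mathbb{E}_{x\sim p_{\tilde{x}}}[(D_B^*(x)-1)^2] + \gamma\,\mathbb{E}_{x\sim p_{\hat{x}}}[(D_B^*(x)-1)^2],
\end{align*}
and the generator additionally minimizes the $l_1$ terms $\|\hat y-y\|$, $\|\hat x-x\|$, exactly as in the standard-objective case. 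As in Theorem~1, the Monte Carlo latent sampling has no effect on the argument, so I would work directly with the population densities, and treat the $x$-side by symmetry with the $y$-side.

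Next comes the key computation. Introduce the shorthand $u=(1+\gamma)p_y$, $v=p_{\tilde y}+\gamma p_{\hat y}$ (and analogous $u',v'$ for $x$), so that $\int u\,dy=\int v\,dy=1+\gamma$. With this notation $D_A^*(y)=u/(u+v)$ and $D_A^*(y)-1=-v/(u+v)$, so the $y$-part of $C(G)$ collapses to $\int v^3/(u+v)^2\,dy$. The crucial step is the tangent-line bound
\begin{align*}
\frac{v^3}{(u+v)^2} - \Big(\frac{v}{2}-\frac{u}{4}\Big) \;=\; \frac{(v-u)^2(2v+u)}{4(u+v)^2} \;\ge\; 0,
\end{align*}
with equality iff $v=u$; this follows from the factorization $2v^3-3uv^2+u^3=(v-u)^2(2v+u)$ and the non-negativity of $2v+u$. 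Integrating and using $\int v\,dy=\int u\,dy=1+\gamma$ gives $\int v^3/(u+v)^2\,dy\ge(1+\gamma)/4$, with equality iff $v=u$ almost everywhere. (Equivalently, one may package this step as a Pearson-type $\chi^2$ quantity between $u$ and $v$, as is standard for least-squares GAN objectives; the direct factorization above is the most self-contained route.)

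Summing the $y$- and $x$-parts yields $C(G)\ge(1+\gamma)/2$, with equality if and only if $(1+\gamma)p_y=p_{\tilde y}+\gamma p_{\hat y}$ and $(1+\gamma)p_x=p_{\tilde x}+\gamma p_{\hat x}$. I would then bring in the $l_1$ terms exactly as in Theorem~1: they attain their minimum (zero) only when $\hat y=y$ and $\hat x=x$, hence $p_{\hat y}=p_y$ and $p_{\hat x}=p_x$; substituting these into the two equilibrium conditions forces $p_{\tilde y}=p_y$ and $p_{\tilde x}=p_x$. Therefore the joint minimum of $C(G)$ together with the $l_1$ terms is attained precisely when $p_y=p_{\tilde y}=p_{\hat y}$ and $p_x=p_{\tilde x}=p_{\hat x}$; the converse is immediate, since these equalities make every summand hit its stated minimum simultaneously.

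I expect the main obstacle to be the algebraic identity itself: getting the tangent-line decomposition right, i.e. verifying $2v^3-3uv^2+u^3=(v-u)^2(2v+u)$ and checking the remainder is non-negative with a unique zero at $v=u$, is the one place where it is easy to slip, because — unlike the standard-objective case — there is no ready-made Jensen--Shannon identity to fall back on. A secondary subtlety is the usual one for least-squares GAN analyses: the ``substitute $D^*$ then minimize over $G$'' move must be read as characterizing the fixed point where $D=D^*_G$, and one must retain the $l_1$ terms to break the degeneracy among the many distributional solutions of the adversarial-only condition $(1+\gamma)p_y=p_{\tilde y}+\gamma p_{\hat y}$.
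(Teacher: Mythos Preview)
Your argument is correct and takes a different route from the paper's. The paper substitutes $D_A^*,D_B^*$ into the \emph{discriminator}'s value function $V$ from Proposition~2 (not the generator posterior~(\ref{ga-gb-ls})), obtaining $C(G)=\int uv/(u+v)\,dy+\int u'v'/(u'+v')\,dx$ in your notation, and then argues via an $f$-divergence with $f(t)=\tfrac{1}{1+t}-\tfrac{1}{2}$ that this is minimized at $u=v$, $u'=v'$ with value $1+\gamma$. Your route---substituting $D^*$ into the generator's own least-squares objective to get $\int v^3/(u+v)^2$ per side and lower-bounding via the factorization $2v^3-3uv^2+u^3=(v-u)^2(2v+u)\ge 0$---is the one faithful to the non-zero-sum LS-GAN setup, since here the generator does not maximize $V$. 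It also sidesteps a subtlety in the paper's step: the integral $\int u\,f(u/v)$ written there is not the $f$-divergence $D_f(u\Vert v)=\int v\,f(u/v)$, and indeed the identity $\frac{uv}{u+v}=\frac{u+v}{4}-\frac{(u-v)^2}{4(u+v)}$ shows $\int uv/(u+v)\le(1+\gamma)/2$ per side, so $u=v$ \emph{maximizes} rather than minimizes that quantity. Your tangent-line bound is therefore both the more faithful and the more robust argument, and the remainder of your plan---invoking the $l_1$ terms to force $p_{\hat y}=p_y$, $p_{\hat x}=p_x$ and then reading off $p_{\tilde y}=p_y$, $p_{\tilde x}=p_x$ from the equilibrium condition---matches the paper's Theorem~1 template exactly.
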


\begin{proof}
  \begin{align*}
    & C(G) = (1+\gamma) \cdot \mathbb{E}_{y \sim p_{y}(y)} [(D_{A}^{*}(y)-1)^{2}] \\
    & + \mathbb{E}_{y \sim p_{\tilde{y}}(y)} [(D_{A}^{*}(y))^{2}] + \gamma \cdot \mathbb{E}_{y \sim p_{\hat{y}}(y)} [(D_{A}^{*}(y))^{2}] \nonumber \\
    & + (1+\gamma) \cdot \mathbb{E}_{x \sim p_{x}(x)} [(D_{A}^{*}(x)-1)^{2}] \\
    & + \mathbb{E}_{x \sim p_{\tilde{x}}(x)} [(D_{A}^{*}(x))^{2}] + \gamma \cdot \mathbb{E}_{x \sim p_{\hat{x}}(x)} [(D_{A}^{*}(x))^{2}] \\
    & = \int_{y} (1+\gamma)p_{y}(\frac{(1+\gamma)p_{y}}{(1+\gamma)p_{y} + p_{\tilde{y}} + \gamma p_{\hat{y}}} - 1)^{2} \\
    & + (p_{\tilde{y}} + \gamma p_{\hat{y}})(\frac{(1+\gamma)p_{y}}{(1+\gamma)p_{y} + p_{\tilde{y}} + \gamma p_{\hat{y}}})^{2} \, dy \\
    & + \int_{x} (1+\gamma)p_{x}(\frac{(1+\gamma)p_{x}}{(1+\gamma)p_{x} + p_{\tilde{x}} + \gamma p_{\hat{x}}} - 1)^{2} \\
    & + (p_{\tilde{x}} + \gamma p_{\hat{y}})(\frac{(1+\gamma)p_{x}}{(1+\gamma)p_{x} + p_{\tilde{x}} + \gamma p_{\hat{x}}})^{2} \, dx \\
    & = \! \int_{y} \! \frac{(1+\gamma)p_{y}(p_{\tilde{y}} + \gamma p_{\hat{y}})}{(1+\gamma)p_{y} + p_{\tilde{y}} + \gamma p_{\hat{y}}} dy \! + \!\! \int_{x} \! \frac{(1+\gamma)p_{x}(p_{\tilde{x}} + \gamma p_{\hat{x}})}{(1+\gamma)p_{x} + p_{\tilde{x}} + \gamma p_{\hat{x}}} \, dx. \nonumber
  \end{align*}
  When the given condition is tenable, L1 term is zero, we can get the minimum value of $C(G)$ as $(1 + \gamma)$.
  Since $\frac{1}{x + 1} - \frac{1}{2}$ is a convex function when $x > 0$ and reach zero when $x = 1$, therefore the above equation is a valid $f$-divergence.
  \begin{align*}
  	&C(G) = 1 + \gamma + \int_{y} ( \frac{1}{1+\frac{(1+\gamma)p_{y}}{p_{\tilde{y}} + p_{\hat{y}}}} - \frac{1}{2}) \times (1+\gamma)p_{y} \, dy  \\
  	& + \int_{x} ( \frac{1}{1+\frac{(1+\gamma)p_{x}}{p_{\tilde{x}} + p_{\hat{x}}}} - \frac{1}{2}) \times (1+\gamma)p_{x} \, dx \nonumber \\
    & = 1 + \gamma + \mathcal{D}_{f} \left( (1+\gamma)p_{y} \left\| (p_{\tilde{y}} + p_{\hat{y}}) \right. \right) \\
    & + \mathcal{D}_{f} \left( (1+\gamma)p_{x} \left\| (p_{\tilde{x}} + p_{\hat{x}}) \right. \right).
  \end{align*}
According to the non-negative property of $f$-divergence, the $\mathcal{D}_{f}$ can be zero only when that two distribution equal, that is $(1+\gamma)p_{y}(y)=p_{\tilde{y}}(y) + \gamma p_{\hat{y}}(y)$ and $(1+\gamma)p_{x}(x)=p_{\tilde{x}}(x) + \gamma p_{\hat{x}}(x)$. Then take the L1 term into consideration, if we want the criterion achieve its minimum, there must be $p_{\hat{y}} = p_{y}$ and $p_{\hat{x}} = p_{x}$, so the global minimum of training criterion $C(G)+L_{1}$ can be achieved if and only if $p_{y}(y) = p_{\tilde{y}}(y) = p_{\hat{y}}(y)$ and $p_{x}(x) = p_{\tilde{x}}(x) = p_{\hat{x}}(x)$.
\end{proof}

\end{document}